\newcommand {\qed} {\raisebox{3 pt}{\fbox{}}}
\newtheorem{lemma}{Lemma}[section]
\newenvironment{proof}{\noindent{\bf Proof:}}{
$\qed$
\medskip
}
\newtheorem{definition}{Definition}[section]
\newcommand{\tuple}[1]{\langle #1 \rangle} 
\newcommand {\setdelim}        {\; | \;}
\newcommand{\nat}{{\mathcal{N}}} 
\newcommand{\absset}{\aleph} 
\newcommand{\abmap}[1]{\psi_{#1}} 
\newcommand{\gspace}{{\mathcal{S}}} 
\newcommand{\abspace}[1]{{\mathcal{A}}_{#1}} 
\newcommand{\absys}{\tuple{ \gspace, \absset}} 
\newcommand{\gstates}{T} 
\newcommand{\abstates}[1]{\gstates_{#1}} 
\newcommand{\gst}{t} 
\newcommand{\goal}{g} 
\newcommand{\abst}[1]{\gst_{#1}} 
\newcommand{\abgoal}[1]{\goal_{#1}} 
\newcommand{\gstj}[1]{\gst^{#1}} 
\newcommand{\abstij}[2]{\gst^{#2}_{#1}} 
\newcommand{\gabst}{u}  
\newcommand{\gabsg}{v} 
\newcommand{\gpairs}{\Pi}  
\newcommand{\abpairs}[1]{\gpairs_{#1}} 
\newcommand{\gpr}{\pi} 
\newcommand{\abpr}[1]{\gpr_{#1}} 
\newcommand{\gprj}[1]{\gpr^{#1}} 
\newcommand{\abprij}[2]{\gpr^{#2}_{#1}} 
\newcommand{\gabpr}{\rho}  
\newcommand{\gcost}{C} 
\newcommand{\gresid}{R} 
\newcommand{\gopt}{{\mbox{\rm OPT}}} 
\newcommand{\gminc}{\gcost^{*}} 
\newcommand{\gminr}{\gresid^{*}} 
\newcommand{\abcost}[1]{\gcost_{#1}} 
\newcommand{\abresid}[1]{\gresid_{#1}} 
\newcommand{\abopt}[1]{\gopt_{#1}} 
\newcommand{\abminc}[1]{\gminc_{#1}} 
\newcommand{\abminr}[1]{\gminr_{#1}} 
\newcommand{\htg}{h(t,g)}
\newcommand{\hi}[2]{h_i(#1,#2)}
\newcommand{\hitg}{\hi{\gst}{\goal}}
\newcommand{\hmax}{h_{max}}
\newcommand{\hmaxa}[2]{\hmax(#1,#2)}  
\newcommand{\hmaxtg}{\hmaxa{\gst}{\goal}} 
\newcommand{\hadd}{h_{add}}
\newcommand{\hadda}[2]{\hadd(#1, #2)} 
\newcommand{\haddtg}{\hadda{\gst}{\goal}}
\newcommand{\primeset}[3]{\vec{P}_{#1}({#2},{#3})} 
\newcommand{\gpath}{\vec{p}}   
\newcommand{\abpath}[1]{\abmap{#1}(\gpath)} 
\newcommand{\abpi}[1]{\gpath_{#1}} 
\newcommand{\gabpath}{\vec{q}} 
\newcommand{\gpaths}[2]{Paths(\gspace, #1,#2)} 
\newcommand{\abpathset}[3]{Paths(\abspace{#1},#2, #3)} 
\begin{document}

\title{A General Theory of Additive State Space Abstractions}

\author{\name Fan Yang \email fyang@cs.ualberta.ca \\
       \name  Joseph Culberson \email  joe@cs.ualberta.ca \\
       \name Robert Holte \email holte@cs.ualberta.ca\\
       \addr Computing Science Department, University of Alberta\\
       Edmonton, Alberta  T6G 2E8 Canada \\
       \name Uzi Zahavi \email zahaviu@cs.biu.ac.il \\
       \addr Computer Science Department, Bar-Ilan University\\
             Ramat-Gan, Israel 92500\\
       \name Ariel Felner \email felner@bgu.ac.il \\
       \addr Information Systems Engineering Department, \\
        Deutsche Telekom Labs, \\
               Ben-Gurion University. \\
              Be'er-Sheva, Israel 85104
}


\maketitle

\begin{abstract}
Informally, a set of abstractions of a state space $S$ is additive if the
distance between any two states in $S$ is always greater than or equal to
the sum of the corresponding distances in the abstract spaces. The first
known additive abstractions, called disjoint pattern databases, were
experimentally demonstrated to produce state of the art performance on
certain state spaces. However, previous applications were restricted to
state spaces with special properties, which precludes disjoint pattern
databases from being defined for several commonly used testbeds, such as
Rubik's Cube, TopSpin and the Pancake puzzle. In this paper we give a
general definition of additive abstractions that can be applied to any
state space and prove that heuristics based on additive abstractions are
consistent as well as admissible. We use this new definition to create
additive abstractions for these testbeds and show experimentally that well
chosen additive abstractions can reduce search time substantially for the
(18,4)-TopSpin puzzle and by three orders of magnitude over state of the
art methods for the 17-Pancake puzzle. We also derive a way of testing if
the heuristic value returned by additive abstractions is provably too low
and show that the use of this test can reduce search time for the
15-puzzle and TopSpin by roughly a factor of two.
\end{abstract}

\section{Introduction}
\label{Introduction}

In its purest form, single-agent heuristic search is concerned with the problem of
finding a least-cost path between two states ({\em start} and {\em goal}) in
a state space given a heuristic function $\htg$ that estimates the
cost to reach the goal state $\goal$ from any state $t$.
Standard algorithms for
single-agent heuristic search such as $\textit{IDA}^*$ \cite{IDAstar} are
guaranteed to find optimal paths if $\htg$ is
{\em admissible},  i.e.\ never
overestimates the actual cost to the goal state from $t$,
and their
efficiency is heavily influenced by the accuracy of $\htg$.
Considerable research has therefore investigated methods for
defining accurate, admissible heuristics.

A common method for defining admissible heuristics,
which has led to major advances in combinatorial
problems \cite{PDB98,sop,rubikPDB,KorfTaylor1996} and planning \cite{planningPDB},
is to ``abstract"
the original state space to create a new, smaller
state space with the key property that for each path $\gpath$\  in the
original space there is a corresponding abstract path whose cost
does not exceed the cost of $\gpath$. Given an abstraction, $\htg$ can be
defined as the cost of the least-cost abstract path from the
abstract state corresponding to $t$ to the abstract state
corresponding to $g$. The best heuristic functions defined by
abstraction are typically based on several abstractions, and are
equal to
either the maximum, or the sum, of the costs returned by
the abstractions \cite{disjointPDB,ADDPDB,maxingAIJ06}.

The sum of the costs returned by a set of abstractions is not always
admissible. If it is, the set of abstractions is said to be
``additive". The main contribution of this paper is to identify
general conditions for abstractions to be additive. The new
conditions subsume most previous notions of ``additive" as special cases.
The greater generality allows additive abstractions to be
defined for state spaces that had no additive abstractions according
to previous definitions, such as Rubik's Cube, TopSpin, the
Pancake puzzle, and related
real-world problems such as the genome rearrangement problem
described by \citeA{genome}.
Our definitions are fully formal, enabling rigorous
proofs of the admissibility and consistency
of the heuristics
defined by our abstractions.
Heuristic
$\htg$ is {\em consistent} if for all states $t$, $g$ and
$u, \htg \le cost(t,u)+h(u,g)$,
where $cost(t,u)$ is the cost of the least-cost path from $t$ to $u$.

The usefulness of our general definitions is
demonstrated experimentally by defining additive abstractions
that substantially reduce the CPU time needed to solve TopSpin
and the Pancake puzzle.  For example, the use of additive abstractions
allows the 17-Pancake puzzle to be solved three orders of magnitude faster than
previous state-of-the-art methods.

Additional experiments show that additive abstractions are not always the
best abstraction method.  The main reason for this is that the solution cost
calculated by an individual additive abstraction can sometimes be
very low.  In the extreme case, which actually arises in practice,
all problems can have abstract solutions
that cost 0.  The final contribution of the paper is to introduce a technique
that is sometimes able to identify that the sum of the costs of the additive
abstractions is provably too small (``infeasible").


The remainder of the paper is organized as follows.
An informal introduction to abstraction
is given in Section \ref{heuristicAbstractions}.
Section \ref{General_Definitions} presents formal general definitions
for abstractions that extend to general additive abstractions.
We provide lemmas proving
the admissibility and consistency of both standard and additive heuristics based on these abstractions.
This section also discusses the relation to previous definitions.
Section \ref{applications} describes successful applications of
additive abstractions to TopSpin and the Pancake puzzle.
Section \ref{negative} discusses the negative results.
Section \ref{infeasible} introduces ``infeasibility"
and presents experimental results showing its effectiveness on
the sliding tile puzzle and TopSpin.
Conclusions are presented in Section \ref{sec-conclusions}.

\section{Heuristics Defined by Abstraction}
\label{heuristicAbstractions}

To illustrate the idea of abstraction and how it is used to define
heuristics, consider the well-known 8-puzzle (the $3 \times 3$
sliding tile puzzle). In this puzzle there are $9$ locations in the
form of a $3 \times 3$ grid and $8$ tiles, numbered 1--8, with the
$9^{th}$ location being empty (or blank). A tile that is adjacent to the
empty location can be moved into the empty location; every move has
a cost of 1. The most common way of abstracting this state space is
to treat several of the tiles as if they were indistinguishable instead of
being distinct \cite{PDB96}. An extreme version of this type of
abstraction is shown in Figure \ref{fig-8puzzle}. Here the tiles are
all indistinguishable from each other, so an abstract state is
entirely defined by the position of the blank. There are therefore
only 9 abstract states, connected as shown in Figure
\ref{fig-8puzzle}. The goal state in the original puzzle has the
blank in the upper left corner, so the abstract goal is the state
shown at the top of the figure. The number beside each abstract
state is the distance from the abstract state to the abstract goal.
For example, in Figure \ref{fig-8puzzle}, abstract state $e$ is 2
moves from the abstract goal. A heuristic function $h(t,g)$ for the distance
from state $t$  to $g$ in the original space is computed in two steps: (1)
compute the abstract state corresponding to $t$ (in this example,
this is done by determining the location of the blank in state $t$);
and then (2) determine the distance from that abstract state to
the abstract goal.
The calculation of the abstract distance can either be done in a preprocessing
step
to create a heuristic lookup table called a {\em pattern database} \cite{PDB94,PDB96}
or at the time it is needed \cite{HierarchicalA,HierarchialRevisited,FelnerAdler}.

\begin{figure}[!ht]
\centerline{
\includegraphics[width=9cm]{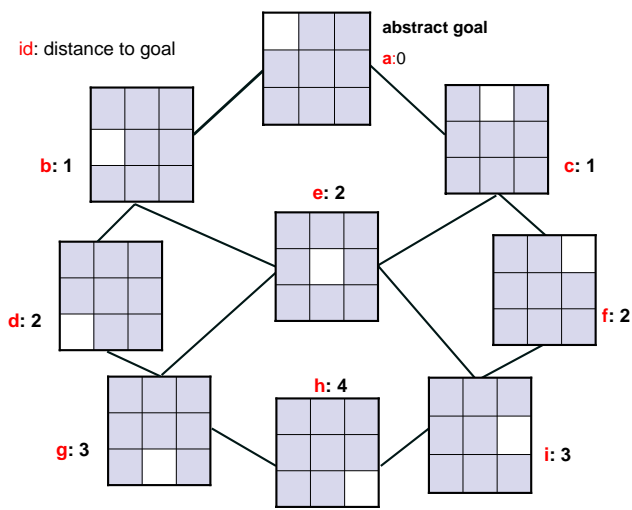}
}
\caption{An abstraction of the 8-puzzle.  The white square in each state
is the blank and the non-white squares are the tiles, which are all
indistinguishable from each other in this abstraction.}
\label{fig-8puzzle}
\end{figure}

Given several abstractions of a state space, the heuristic $\hmaxtg$ can
be defined as the maximum of the abstract distances for $t$ given by the
abstractions individually. This is the standard method for defining a
heuristic function given multiple abstractions \cite{maxingAIJ06}. For
example, consider state $A$ of the $3\times 3$ sliding tile puzzle shown
in the top left of Figure \ref{fig-maxPDB} and the goal state shown below
it. The middle column shows an abstraction of these two states ($A_1$ and
$g_1$) in which tiles 1, 3, 5, and 7, and the blank, are distinct while
the other tiles are indistinguishable from each other. We refer to the
distinct tiles as ``distinguished tiles" and the indistinguishable tiles
as ``don't care''  tiles. The right column shows the complementary
abstraction, in which tiles 1, 3, 5, and 7 are the ``don't cares" and
tiles 2, 4, 6, and 8 are distinguished. The arrows in the figure trace out
a least-cost path to reach the abstract goal $g_i$ from state $A_i$ in
each abstraction. The cost of solving $A_1$ is 16 and the cost of solving
$A_2$ is 12. Therefore, $\hmaxa{A}{\goal}$ is 16, the maximum of these two
abstract distances.

\begin{figure}[!ht]
\centerline{
\includegraphics[width=9cm]{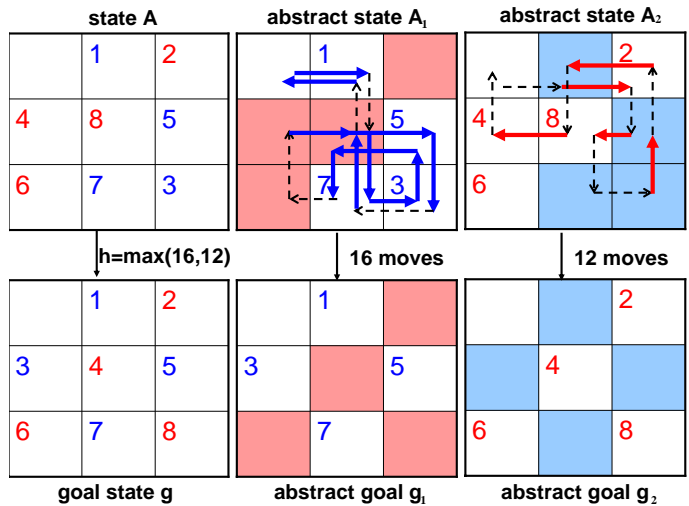}
}
\caption{Computation of $\hmaxa{A}{\goal}$, the standard, maximum-based
heuristic value for state $A$ (top left) using the two abstractions shown in
the middle and right columns. Solid arrows denote distinguished moves,
dashed arrows denote ``don't care" moves.}
\label{fig-maxPDB}
\end{figure}

\subsection{Additive Abstractions}
\label{additivePDB}

Figure \ref{fig-infeasible} illustrates how additive abstractions
can be defined for the sliding tile puzzle
\cite{disjointPDB,ADDPDB,KorfTaylor1996}. State $A$ and the abstractions are the
same as in Figure \ref{fig-maxPDB}, but the costs of the operators
in the abstract spaces are defined differently. Instead of all
abstract operators having a cost of 1, as was the case previously,
an operator only has a cost of 1 if it moves a distinguished tile;
such moves are called ``distinguished moves" and are shown as solid
arrows in Figures \ref{fig-maxPDB} and \ref{fig-infeasible}.
An operator that moves a ``don't care'' tile (a ``don't care" move) has a
cost of 0 and is shown as a dashed arrow in the figures.
Least-cost paths in abstract spaces defined this way
therefore minimize the number of distinguished moves without
considering  how many ``don't care'' moves are made. For example, the least-cost
path for $A_1$ in Figure \ref{fig-infeasible}
contains fewer distinguished moves (9 compared to 10)
than the least-cost path for $A_1$ in Figure \ref{fig-maxPDB}---and is
therefore lower cost according to the cost function
just described---but contains more moves in total
(18 compared to 16) because it has more ``don't care'' moves (9 compared to 6).
As Figure \ref{fig-infeasible}
shows, 9 distinguished moves are needed to solve $A_1$ and 5
distinguished moves are needed to solve $A_2$. Because no tile is
distinguished in both abstractions, a move that has a cost of 1 in
one space has a cost of 0 in the other space, and it is therefore
admissible to add the two distances. The heuristic calculated using
additive abstractions is referred to as $\hadd$; in this example,
$\hadda{A}{\goal}=9+5=14$. Note that $\hadda{A}{\goal}$ is less than
$\hmaxa{A}{\goal}$ in this example, showing that heuristics based on
additive abstractions are not always superior to the standard,
maximum-based method of combining multiple abstractions even though
in general they have proven very effective on the sliding tile
puzzles \cite{disjointPDB,ADDPDB,KorfTaylor1996}.

\begin{figure}[!ht]
\centerline{
\includegraphics[width=9cm]{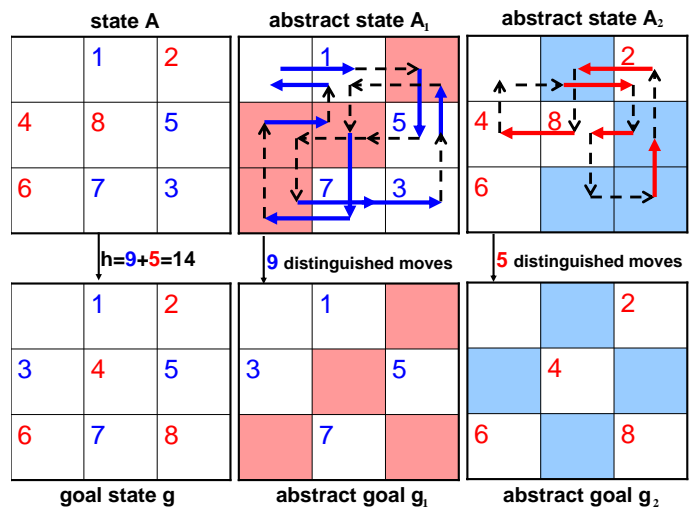}
}
\caption{Computation of $\hadda{A}{\goal}$, the additive
heuristic value for state $A$.
Solid arrows denote distinguished moves,
dashed arrows denote ``don't care" moves.}
\label{fig-infeasible}
\end{figure}

The general method defined by Korf, Felner, and colleagues
\cite{disjointPDB,ADDPDB,KorfTaylor1996}
creates a set of $k$ additive abstractions by partitioning the tiles into
$k$ disjoint groups and defining one abstraction for each group by making
the tiles in that group distinguished in the abstraction. An important
limitation of this and most other existing methods of defining additive
abstractions is that they do not apply to spaces in which an operator can
move more than one tile at a time, unless there is a way to guarantee that
all the tiles that are moved by the operator are in the same group.

An example of a state space that has no additive abstractions
according to previous definitions is the Pancake puzzle. In the
$N$-Pancake puzzle, a state is a permutation of $N$ tiles
($0,1,...,N-1$) and has $N-1$ successors, with the $l^{th}$ successor
formed by reversing the order of the first $l+1$ positions of the
permutation ($ 1\leq l \leq N-1$). For example, in the 4-Pancake
puzzle shown in Figure \ref{fig-state4successor}, the state at the
top of the figure has three successors, which are formed by
reversing the order of the first two tiles, the first three tiles,
and all four tiles, respectively.
Because the operators move more than one tile and any tile
can appear in any location
there is no
non-trivial way to partition the tiles so that all the tiles moved
by an operator are distinguished in just one abstraction. Other
common state spaces that have no additive abstractions according to
previous definitions---for similar reasons---are Rubik's Cube and
TopSpin.

\begin{figure}[!ht]
\centerline{
\includegraphics[width=6cm]{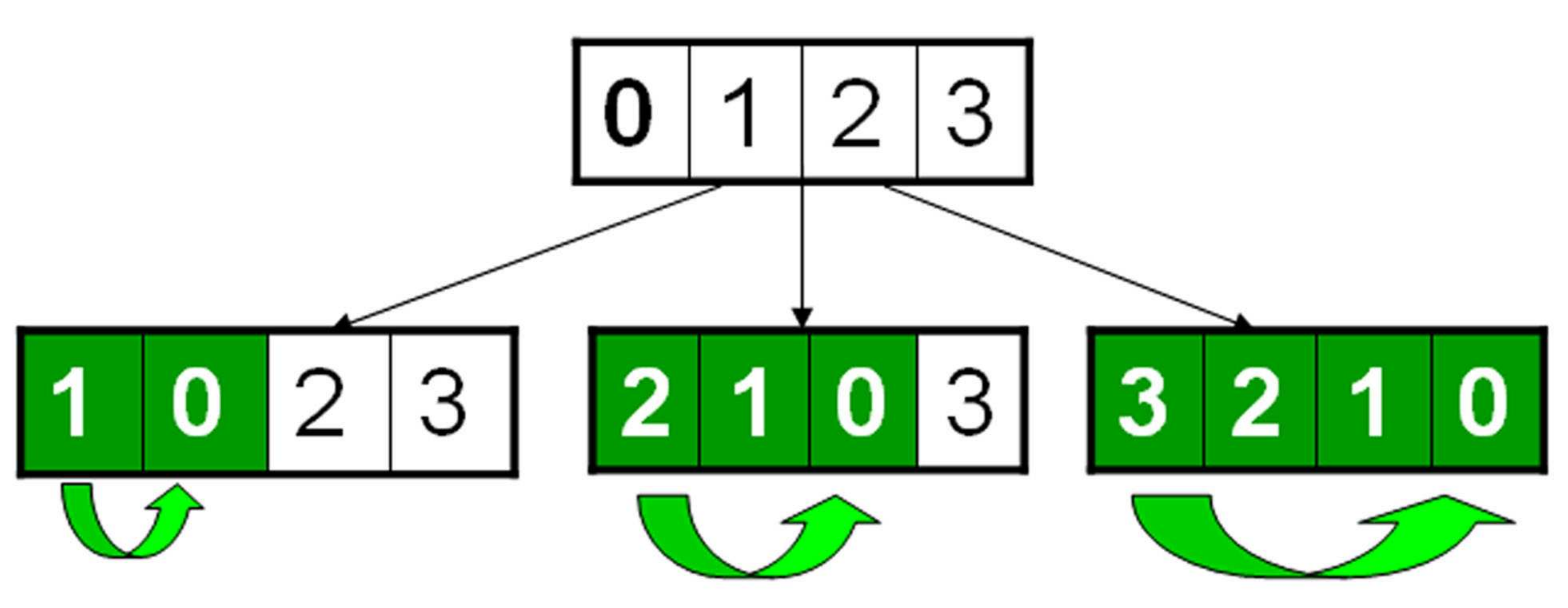}
}
\caption{In the 4-Pancake puzzle each state has three successors.}
\label{fig-state4successor}
\end{figure}

The general definition of additive abstractions presented in the
next section overcomes the limitations of previous definitions.
Intuitively, abstractions will be additive
provided that the cost of each operator is divided among the
abstract spaces.
Our definition provides a formal basis for this intuition.
There are numerous ways to do this even when
operators move many tiles (or, in other words, make changes to many state variables).
For example, the operator cost might be divided
proportionally across the abstractions based on the percentage of
the tiles moved by the operator that are distinguished in each
abstraction. We call this method of defining abstract costs
``cost-splitting". For example, consider two abstractions of the
4-Pancake puzzle, one in which tiles 0 and 1 are distinguished, the
other in which tiles 2 and 3 are distinguished. Then the middle
operator in Figure \ref{fig-state4successor} would have a cost of
$\frac{2}{3}$ in the first abstract space and $\frac{1}{3}$ in the
second abstract space, because of the three tiles this operator
moves, two are distinguished in the first abstraction and one is
distinguished in the second abstraction.

A different method for dividing operator costs among abstractions
focuses on a specific location (or locations)
in the puzzle and assigns the full cost of the operator
to the abstraction in which the tile that moves into this location
is distinguished.
We call this a ``location-based" cost definition.
In the Pancake puzzle it is natural to use the leftmost location as the
special location since every operator changes the tile in this
location.
The middle operator in Figure \ref{fig-state4successor} would
have a cost of $0$ in the abstract space in which tiles 0 and 1
are distinguished and a cost of $1$
in the abstract space in which tiles 2 and 3
are distinguished because the operator moves tile 2 into
the leftmost location.

Both these methods apply to Rubik's Cube and TopSpin, and many
other state spaces in addition to the Pancake puzzle, but the
$\hadd$ heuristics they produce are not always superior to the
$\hmax$ heuristics based on the same tile partitions. The
theory and experiments in the remainder of the paper shed some light
on the general question of when $\hadd$ is preferable to
$\hmax$.

\section{Formal Theory of Additive Abstractions}
\label{General_Definitions} In this section, we give formal
definitions and lemmas related to state spaces, abstractions, and
the heuristics defined by them, and discuss their meanings and
relation to previous work.
The definitions of state space etc.\ in
Section \ref{DEFstatespace} are standard, and the definition of
state space abstraction in Section \ref{DEFabstraction} differs from
previous definitions only in one important detail:
each state transition in an abstract space has two costs associated with it
instead of just one.
The main new contribution is the
definition of additive abstractions in Section \ref{addabstraction}.

The underlying structure of our abstraction definition is
a directed graph (digraph) homomorphism.
For easy reference, we quote here standard definitions of digraph and digraph homomorphism \cite{PavolHell}.

\begin{definition}
 A {\em digraph} $G$ is a finite set $V = V (G)$ of
vertices, together with a binary relation $E = E(G)$ on $V.$ The elements $(u, v)$ of E are called the {\em arcs} of G.
\end{definition}

\begin{definition}
 Let $G$ and $H$ be any digraphs. A {\em homomorphism} of $G$ to $H$, written as
f : $G \rightarrow H$ is a mapping $f $: $V (G) \rightarrow V (H)$ such that $(f(u),f(v))\in E(H)$ whenever $(u,v) \in E(G)$.
\label{def-digraph-homo}
\end{definition}

Note that the digraphs $G$ and $H$ may have self-loops, $(u,u)$, and
a homomorphism is not required to be surjective in either vertices or arcs.
We typically refer to arcs as edges, but it should be kept in mind that,
in general, they are directed edges, or ordered pairs.

\subsection{State Space}

\label{DEFstatespace}
\begin{definition}
\label{DEF.statespace}
A {\em state space} is a weighted directed graph $\gspace =
\tuple{\gstates, \gpairs, \gcost }$ where $\gstates$ is a
finite set of states, $\gpairs \subseteq \gstates \times \gstates$
is a set of directed edges (ordered pairs of states) representing
state transitions, and
$\gcost : \gpairs \longrightarrow {\nat} = \{ 0,1,2,3,\dots\}$
is the edge cost function. 
\end{definition}
In typical practice, $\gspace$ is defined implicitly.
Usually each distinct state in $\gstates$ corresponds to an assignment of values to a set
of state variables.
$\gpairs$ and  $\gcost$ derive from a successor function, or a set of planing operators.
In some cases, $\gstates$ is restricted to the set of states reachable from a given state.
For example, in the 8-puzzle, the set of edges $\gpairs$ is
defined by the rule
``a tile that is adjacent to the empty location can be moved into the
empty location", and the set of states $\gstates$ is defined in one
of two ways: either as
the set of states reachable from the goal state,
or as
the set of permutations of the tiles and the blank,
in which case $\gstates$ consists of two components that are not connected
to one another.
The standard cost function $\gcost$ for the 8-puzzle assigns a cost of $1$
to all edges, but it is easy to imagine cost functions for the 8-puzzle
that depend on the tile being moved or the locations involved in the
move.

A {\em path} from state $\gst$ to state $\goal$
is a sequence of edges beginning at $\gst$ and ending at $\goal$.
Formally,
$\gpath$ is a path from state $\gst$ to state $\goal$ if
$\gpath =
\tuple{ \gprj{1}, \dots, \gprj{n} }, \gprj{j} \in \gpairs$ where
$\gprj{j} = (\gstj{j-1}, \gstj{j}), j \in \{ 1, \dots, n \}$ and $\gstj{0}
= \gst, \gstj{n} = \goal$. Note the use of superscripts rather
than subscripts to distinguish states and edges within a state
space. The {\em length} of $\gpath$ is the number of edges $n$ and
its {\em cost} is $\gcost(\gpath) = \sum_{j=1}^n \gcost(\gprj{j})$.
We use $ \gpaths{\gst}{\goal}$ to denote the set of all paths from
$\gst$ to  $\goal$ in $\gspace$.
\begin{definition}
\label{DEF.OPT}
The {\em optimal (minimum) cost} of a path from state $\gst$ to state $\goal$ in $\gspace$ is defined by
\begin{eqnarray*}
\gopt(\gst,\goal) = \min_{ \gpath \in \gpaths{\gst}{\goal}} \gcost(\gpath)
\end{eqnarray*}
\end{definition}

A {\em pathfinding problem} is a triple $\tuple{\gspace, \gst,
\goal}$, where
$\gspace$ is a state space and
$\gst,\goal\in\gstates$, with the objective of
finding the minimum cost of a path from $\gst$ to $\goal$,
 or in some cases
finding a minimum cost path
$\gpath \in \gpaths{\gst}{\goal}$
such that $\gcost(\gpath) = \gopt(\gst,\goal)$.
Having just one goal state may
seem restrictive,
but problems having a set of goal states can be
accommodated with this definition
by adding a virtual goal state to the state space
with zero-cost edges from the actual goal states to the virtual
goal state.

\subsection{State Space Abstraction}
\label{DEFabstraction}

\begin{definition}
\label{DEF.abstractsystem}
An {\em Abstraction System} is a pair $\absys$ where
$\gspace = \tuple{\gstates, \gpairs, \gcost }$ is a state space
and
$\absset = \{ \tuple{\abspace{i},\abmap{i}} \setdelim \abmap{i}:\gspace \rightarrow \abspace{i}, 1\le i \le k\}$ is a set of {\em abstractions}, where each abstraction is a pair consisting of an {\em abstract state space} and an {\em abstraction mapping}, where ``abstract state space" and ``abstraction mapping"
are defined below.
\end{definition}

Note that these abstractions are not intended to form a hierarchy and should be considered a set of independent abstractions.

\begin{definition}
\label{DEF.absstatespace}
An {\em abstract state space} is a directed graph with two weights
per edge, defined by a four-tuple $\abspace{i} =
\tuple{\abstates{i}, \abpairs{i}, \abcost{i}, \abresid{i}}$.
\end{definition}
$\abstates{i}$ is the set of abstract states and $\abpairs{i}$ is
the set of abstract edges, as in the definition of a state space. In
an abstract space there are two costs associated with each
$\abpr{i}\in\abpairs{i}$, the {\em primary cost} $\abcost{i} :
\abpairs{i} \longrightarrow \nat$ and the {\em  residual cost}
$\abresid{i} : \abpairs{i} \longrightarrow \nat$. The idea of having
two costs per abstract edge, instead of just one,
is inspired by the
practice, illustrated in Figure \ref{fig-infeasible}, of having two
types of edges in the abstract space and counting distinguished
moves differently than ``don't care'' moves. In that example,
our primary cost is the cost associated with the distinguished moves,
and our residual cost
is the cost associated with the ``don't care'' moves.
The usefulness of considering the cost of
``don't care'' moves arises when the abstraction system is additive, as suggested by Lemmas \ref{LEM.ADDBETTER} and
\ref{LEM.INFEASIBLE} below.
These indicate when the additive heuristic is infeasible and can be improved,   the effectiveness of which  will become apparent in the
experiments reported in Section \ref{infeasible}.

Like edges, each
abstract path $\abpi{i} = \tuple{ \abprij{i}{1}, \dots,
\abprij{i}{n} }$ in $\abspace{i}$ has a primary and residual cost:
$\abcost{i}(\abpi{i}) = \sum_{j=1}^n \abcost{i}(\gprj{j}_i)$, and
$\abresid{i}(\abpi{i}) = \sum_{j=1}^n \abresid{i}(\gprj{j}_i)$.

\begin{definition}
\label{DEF.absmapping}
An {\em abstraction mapping}
$\abmap{i} : \gspace \longrightarrow \abspace{i}$
between state space $\gspace$ and abstract state space $\abspace{i}$
is defined by a mapping between the states of $\gspace$
and the states of $\abspace{i}$, $\abmap{i}: T \rightarrow T_i$,
that satisfies the two following conditions.
\end{definition}
The first condition is that the mapping is a homomorphism and thus connectivity
in the original space is preserved, i.e.,

\begin{equation}\label{CONNcondition}
\forall  (u,v) \in \gpairs, (\abmap{i}(u), \abmap{i}(v)) \in \abpairs{i}
\end{equation}

In other words, for each edge in the original space $\gspace$ there
is a corresponding edge in the abstract space $\abspace{i}$. Note
that if $u \ne v$ and $\abmap{i}(u) = \abmap{i}(v)$ then a
non-identity edge in $\gspace$ gets mapped to an identity edge
(self-loop) in $\abspace{i}$. We use the shorthand notation
$\abstij{i}{j} = \abmap{i}(\gstj{j})$ for the abstract state in
$\abstates{i}$ corresponding to $\gstj{j} \in \gstates$, and
$\abprij{i}{j} = \abmap{i}(\gprj{j}) = (\abmap{i}(u^j),
\abmap{i}(v^j))$ for the abstract edge in $\abpairs{i}$
corresponding to $\gprj{j} = (u^j,v^j) \in \gpairs$.

The second condition that the state mapping must satisfy
is that abstract edges must not cost more than any of the edges they
correspond to in the original state space,  i.e.,

\begin{equation}\label{ADMcondition}
 \forall \gpr \in \gpairs,
\abcost{i}(\abpr{i}) + \abresid{i}(\abpr{i})
\le
\gcost(\gpr)
\end{equation}

As a consequence, if multiple edges in the original space map to
the same abstract edge $\gabpr \in \abpairs{i}$, as is usually the case,
$\abcost{i}(\gabpr) + \abresid{i}(\gabpr)$ must be less than
or equal to all of them,
i.e.,

\begin{equation*}
\forall \gabpr \in \abpairs{i}, \  \abcost{i}(\gabpr) + \abresid{i}(\gabpr)
\le
\min_{\gpr \in \gpairs, \abmap{i}(\gpr) = \gabpr} \gcost(\gpr)
\end{equation*}
Note that if no edge maps to an edge in the abstract space, then no bound on the cost of that edge is imposed.

For example, the state mapping used to define the abstraction in the
middle column of Figure \ref{fig-infeasible} maps an 8-puzzle state
to an abstract state by renaming tiles 2, 4, 6, and 8 to ``don't
care". This mapping satisfies condition (1) because ``don't care"
tiles can be exchanged with the blank whenever regular tiles can.
It satisfies condition (2) because each move is either
a distinguished move
($\abcost{i}(\abpr{i}) = 1$ and $\abresid{i}(\abpr{i}) = 0$)
or a ``don't care'' move
($\abcost{i}(\abpr{i}) = 0$ and $\abresid{i}(\abpr{i}) = 1$)
and in both cases
$\abcost{i}(\abpr{i}) +
\abresid{i}(\abpr{i}) = 1$, the cost of the edge $\pi$ in the original space.

The set of abstract states $\abstates{i}$ is usually equal to
$\abmap{i}(\gstates)  = \{ \abmap{i}(t) \setdelim t \in \gstates
\}$, but it can be a superset, in which case the
abstraction is said to be {\em non-surjective} \cite{SARA2000}.
Likewise, the set of abstract edges $\abpairs{i}$ is usually equal
to $\abmap{i}(\gpairs) = \{ \abmap{i}(\pi) \setdelim \pi \in \gpairs
\}$ but it can be a superset even if $\abstates{i} =
\abmap{i}(\gstates)$.
In some cases, one deliberately chooses an abstract space that has
states or edges that have no counterpart in the original space.
For example,
the methods that define abstractions by dropping operator preconditions
must, by their very design,
create abstract spaces that
have edges that do not correspond to any edge in the original space
({\em e.g.}\ \citeR{PearlHeuristics}).
In other cases, non-surjectivity is an inadvertent consequence
of the abstract space being defined implicitly as the set of states
reachable from the abstract goal state by applying operator inverses.
For example, if a tile in the $2 \times 2$ sliding tile puzzle
is mapped to the blank
in the abstract space, the puzzle now has two blanks and states are
reachable in the abstract space that have no counterpart in the
original space \cite{SARA2000}. For additional
examples and an extensive discussion of non-surjectivity
see the previous paper by \citeA{robistvan04}.

All the lemmas and definitions that
follow assume 
an abstraction system
$\absys$ containing $k$ abstractions has been given.
Conditions \eqref{CONNcondition} and \eqref{ADMcondition} guarantee
the following.
\begin{lemma}
\label{LEM.PathCost}
For any path
$\gpath \in \gpaths{u^1}{u^2}$ in $\gspace$,
there is a corresponding abstract path $\abpath{i}$
from $u^1_i$ to $u^2_i$ in $\abspace{i}$
and $\abcost{i}(\abpath{i}) + \abresid{i}(\abpath{i}) \le
\gcost(\gpath)$.
\end{lemma}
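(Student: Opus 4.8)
The plan is to prove the statement by induction on the length $n$ of the path $\gpath$, constructing the corresponding abstract path edge-by-edge using the homomorphism condition \eqref{CONNcondition} and bounding costs using the admissibility condition \eqref{ADMcondition}.

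First I would set up notation: write $\gpath = \tuple{\gprj{1}, \dots, \gprj{n}}$ with $\gprj{j} = (\gstj{j-1}, \gstj{j})$, $\gstj{0} = u^1$, $\gstj{n} = u^2$. The natural candidate for the corresponding abstract path is $\abpath{i} = \tuple{\abprij{i}{1}, \dots, \abprij{i}{n}}$, where $\abprij{i}{j} = \abmap{i}(\gprj{j}) = (\abmap{i}(\gstj{j-1}), \abmap{i}(\gstj{j}))$ — exactly the shorthand already introduced after \eqref{CONNcondition}. The base case $n = 0$ is the empty path from $u^1_i$ to itself with zero primary and residual cost, so the inequality holds trivially (both sides zero); alternatively one can take $n=1$ as the base case and argue directly from \eqref{ADMcondition}.

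For the inductive step, there are really two things to verify. First, that $\abpath{i}$ is genuinely a path in $\abspace{i}$ from $u^1_i$ to $u^2_i$: each $\abprij{i}{j} \in \abpairs{i}$ by condition \eqref{CONNcondition} applied to the edge $\gprj{j} \in \gpairs$, consecutive abstract edges share the endpoint $\abmap{i}(\gstj{j})$ so they chain correctly, and the endpoints are $\abmap{i}(u^1) = u^1_i$ and $\abmap{i}(u^2) = u^2_i$. Second, the cost bound: by definition $\abcost{i}(\abpath{i}) + \abresid{i}(\abpath{i}) = \sum_{j=1}^n \bigl(\abcost{i}(\abprij{i}{j}) + \abresid{i}(\abprij{i}{j})\bigr)$, and applying \eqref{ADMcondition} to each edge $\gprj{j}$ gives $\abcost{i}(\abprij{i}{j}) + \abresid{i}(\abprij{i}{j}) \le \gcost(\gprj{j})$, so summing over $j$ yields $\abcost{i}(\abpath{i}) + \abresid{i}(\abpath{i}) \le \sum_{j=1}^n \gcost(\gprj{j}) = \gcost(\gpath)$.

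I do not expect any serious obstacle here — the statement is essentially a bookkeeping consequence of the two defining conditions of an abstraction mapping, and induction is almost overkill since the cost bound is just a term-by-term sum. The one point that needs a little care is making sure the abstract edge $\abprij{i}{j}$ really lies in $\abpairs{i}$ and not merely in $\abmap{i}(\gpairs)$; this is exactly what condition \eqref{CONNcondition} asserts, so it is immediate. A second subtlety worth a sentence is the case $\gstj{j-1} = \gstj{j}$ being impossible (paths use edges in $\gpairs$), versus $\abmap{i}(\gstj{j-1}) = \abmap{i}(\gstj{j})$ being allowed — in the latter case $\abprij{i}{j}$ is a self-loop in $\abspace{i}$, which is permitted since digraphs here may have self-loops, as noted right after Definition \ref{def-digraph-homo}. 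So the whole argument is short and clean.
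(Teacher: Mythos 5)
Your proposal is correct and matches the paper's proof in substance: both construct the abstract path $\abpath{i}$ edge-by-edge via condition \eqref{CONNcondition} and obtain the cost bound by summing condition \eqref{ADMcondition} over the edges. The induction framing is cosmetic overkill (as you note yourself), and the paper simply states the term-by-term sum directly, but the argument is the same.
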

\begin{proof}
By definition, $\gpath \in \gpaths{u^1}{u^2}$ in $\gspace$
is a sequence of edges
$\tuple{ \gprj{1}, \dots, \gprj{n} }, \gprj{j} \in \gpairs$
where $\gprj{j} = (\gstj{j-1}, \gstj{j}),  j \in \{ 1, \dots, n \}$
and $\gstj{0} = u^1, \gstj{n} = u^2$.
Because   $\abpairs{i} \supseteq \abmap{i}(\gpairs)$,
each of the corresponding abstract edges exists ($\gprj{j}_i \in \abpairs{i}$).
Because $\abprij{i}{1} = (u^1_i,\abstij{i}{1})$ and
$\abprij{i}{n} = (\abstij{i}{n-1},u^2_i)$,
the sequence
$\abpath{i} =
\tuple{ \abprij{i}{1}, \dots, \abprij{i}{n} }$
is a path from $u^1_i$ to $u^2_i$.

By definition, $\gcost(\gpath) = \sum_{j=1}^n \gcost(\gprj{j})$.
For each $\gprj{j}$, Condition (2) ensures that
$\gcost(\gprj{j}) \ge \abcost{i}(\abprij{i}{j}) + \abresid{i}(\abprij{i}{j})$,
and therefore
$\gcost(\gpath) \ge
\sum_{j=1}^n (\abcost{i}(\abprij{i}{j}) + \abresid{i}(\abprij{i}{j}))
= \sum_{j=1}^n \abcost{i}(\abprij{i}{j}) + \sum_{j=1}^n \abresid{i}(\abprij{i}{j})
= \abcost{i}(\abpath{i}) + \abresid{i}(\abpath{i})$.
\end{proof}

For example, consider state $A$ and goal $g$ in Figure \ref{fig-infeasible}.
Because of condition \eqref{CONNcondition},
any path from state $A$ to $g$ in the original space is also
a path from abstract state $A_1$ to abstract goal state $g_1$
and from abstract state $A_2$ to $g_2$
in the abstract spaces.
Because of condition \eqref{ADMcondition},
the cost of the path in the original space is
greater than or equal to
the sum of the primary cost and the residual cost of the
corresponding abstract path in each abstract space.

We use $\abpathset{i}{u}{v}$ to mean the set of all paths from $u$ to $v$ in space $\abspace{i}$.
\begin{definition}
\label{DEF.ABSOPT}
The  {\em optimal abstract cost} from
abstract state $\gabst$ to abstract state $\gabsg$ in $\abspace{i}$
is defined as
\begin{equation*}
\abopt{i}(\gabst,\gabsg) =
\min_{ \gabpath \in \abpathset{i}{u}{v} }
\abcost{i}(\gabpath) + \abresid{i}(\gabpath)
\end{equation*}
\end{definition}

\begin{definition}
\label{DEF.heuristic}
We define the heuristic
obtained from  abstract space $\abspace{i}$ for the cost from state $\gst$ to $\goal$ as
\begin{eqnarray*}
\hitg=\abopt{i}(\abst{i},\abgoal{i}).
\end{eqnarray*}
\end{definition}
Note that in these definitions,
the path minimizing the cost is not required to be the image, $\abpath{i}$,
of a path $\gpath$ in $\gspace$.

The following prove that the heuristic generated by each individual
abstraction is
admissible (Lemma \ref{LEM.CversusCR})
and consistent (Lemma \ref{LEM.individualConsistent}).

\begin{lemma}
\label{LEM.CversusCR}
$\hitg \le
\gopt(\gst,\goal)$
for all $\gst, \goal  \in \gstates$ and all $ i  \in \{ 1, \dots, k \}$.
\end{lemma}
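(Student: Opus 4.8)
The plan is to reduce the inequality directly to Lemma~\ref{LEM.PathCost}, using only the observation that $\hitg$ is defined as a minimum taken over \emph{all} abstract paths between the relevant endpoints, whereas each concrete path in $\gspace$ gives rise to just one such abstract path. So the lemma should come out as an almost immediate corollary.

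First I would handle the degenerate case: if there is no path from $\gst$ to $\goal$ in $\gspace$, then $\gpaths{\gst}{\goal} = \emptyset$ and $\gopt(\gst,\goal) = +\infty$ under the usual convention for an empty minimum, so the inequality holds trivially. Assume therefore that $\gpaths{\gst}{\goal} \neq \emptyset$, and pick a path $\gpath \in \gpaths{\gst}{\goal}$ attaining the minimum in Definition~\ref{DEF.OPT}, so that $\gcost(\gpath) = \gopt(\gst,\goal)$.

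Next I would invoke Lemma~\ref{LEM.PathCost} with $u^1 = \gst$ and $u^2 = \goal$. It produces a corresponding abstract path $\abpath{i}$ running from $\abst{i} = \abmap{i}(\gst)$ to $\abgoal{i} = \abmap{i}(\goal)$ in $\abspace{i}$, and guarantees $\abcost{i}(\abpath{i}) + \abresid{i}(\abpath{i}) \le \gcost(\gpath)$. In particular $\abpath{i}$ belongs to the set $\abpathset{i}{\abst{i}}{\abgoal{i}}$ over which the minimum defining $\abopt{i}(\abst{i},\abgoal{i})$ ranges.

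Finally, chaining the definitions, $\hitg = \abopt{i}(\abst{i},\abgoal{i}) \le \abcost{i}(\abpath{i}) + \abresid{i}(\abpath{i}) \le \gcost(\gpath) = \gopt(\gst,\goal)$, where the first inequality is just the fact that $\abopt{i}$ is a minimum and $\abpath{i}$ is one competitor. I do not expect any real obstacle here: all the work is already packaged inside Lemma~\ref{LEM.PathCost} (hence in conditions~\eqref{CONNcondition} and~\eqref{ADMcondition}). The only points worth stating carefully are the empty-path convention and the fact, noted after Definition~\ref{DEF.heuristic}, that the path achieving $\abopt{i}$ need not be the image of any path in $\gspace$ --- which is precisely why we obtain only $\le$ rather than equality.
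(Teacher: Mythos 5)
Your proof is correct and follows essentially the same route as the paper's: both arguments rest entirely on Lemma~\ref{LEM.PathCost} together with the observation that the image $\abpath{i}$ of a concrete path is one competitor in the minimization defining $\abopt{i}(\abst{i},\abgoal{i})$ (the paper phrases this as a subset inclusion of path sets in its Claim~\ref{LEM.CversusCR}.1, while you instantiate at an optimal $\gpath$ and chain the inequalities, which is the same idea). Your explicit handling of the case $\gpaths{\gst}{\goal}=\emptyset$ is a small point the paper leaves implicit, but it does not change the substance of the argument.
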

\begin{proof}
By Lemma \ref{LEM.PathCost},
$\gcost(\gpath) \ge
\abcost{i}(\abpath{i}) +\abresid{i}(\abpath{i})$,
and therefore

\[
\min_{ \gpath \in \gpaths{\gst}{\goal}}
\gcost(\gpath)
\ge
\min_{\gpath \in \gpaths{\gst}{\goal} }
\abcost{i}(\abpath{i})+\abresid{i}(\abpath{i}).
\]

\noindent
The left hand side of this inequality is $ \gopt(\gst,\goal)$
by definition, and the right hand side is proved in the following Claim \ref{LEM.CversusCR}.1
to be greater than or equal to $\hitg$.
Therefore, $\gopt(\gst,\goal) \ge  \hitg$.

{\bf Claim \ref{LEM.CversusCR}.1} {\em
$\min_{\gpath \in \gpaths{\gst}{\goal}}
\abcost{i}(\abpath{i})+\abresid{i}(\abpath{i})
\ge \hitg$}
for all $\gst, \goal  \in \gstates$.

{\bf Proof of Claim \ref{LEM.CversusCR}.1: }
By Lemma~\ref{LEM.PathCost} for every path $\gpath$ there is a corresponding abstract path. There may also be additional paths in the abstract space, that is,
$\{ \abpath{i} \setdelim \gpath \in \gpaths{\gst}{\goal} \} \subseteq
\abpathset{i}{\abst{i}}{\abgoal{i}}$.
It follows that
$\{ \abcost{i}(\abpath{i})+\abresid{i}(\abpath{i})
\setdelim \gpath  \in \gpaths{\gst}{\goal} \}
\subseteq
\{ \abcost{i}(\gabpath) + \abresid{i}(\gabpath)
\setdelim \gabpath \in \abpathset{i}{\abst{i}}{\abgoal{i}}\}$.
Therefore,
\[
\min_{  \gpath \in \gpaths{\gst}{\goal} }
 \abcost{i}(\abpath{i})+\abresid{i}(\abpath{i})
  \ge
\min_{ \gabpath \in \abpathset{i}{\abst{i}}{\abgoal{i}} }
\abcost{i}(\gabpath) + \abresid{i}(\gabpath)
 =   \abopt{i}(\abst{i},\abgoal{i}) = \hitg
 \]

\end{proof}

\begin{lemma}
\label{LEM.individualConsistent}
$\hi{t^1}{\goal} \le \gopt(t^1, t^2) + \hi{t^2}{\goal}$
for all $t^1, t^2,\goal  \in \gstates$ and all $i  \in \{ 1, \dots, k \}$.
\end{lemma}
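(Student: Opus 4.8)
The plan is to prove the consistency of the individual heuristic $\hitg$ by combining the admissibility result (Lemma \ref{LEM.CversusCR}) with the natural concatenation property of optimal abstract paths. The statement $\hi{t^1}{\goal} \le \gopt(t^1, t^2) + \hi{t^2}{\goal}$ asserts a triangle-inequality-like bound, so the strategy is to exhibit a path in the abstract space $\abspace{i}$ from $\abst{i}^1$ to $\abgoal{i}$ whose total cost (primary plus residual) is at most $\gopt(t^1,t^2) + \hi{t^2}{\goal}$, and then invoke the definition of $\abopt{i}$ as a minimum over all such paths.

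First I would take a least-cost path $\gpath \in \gpaths{t^1}{t^2}$ in $\gspace$ realizing $\gcost(\gpath) = \gopt(t^1, t^2)$. By Lemma \ref{LEM.PathCost}, its image $\abpath{i}$ is an abstract path from $\abst{i}^1$ to $\abst{i}^2$ with $\abcost{i}(\abpath{i}) + \abresid{i}(\abpath{i}) \le \gcost(\gpath) = \gopt(t^1,t^2)$. Next I would take an optimal abstract path $\gabpath \in \abpathset{i}{\abst{i}^2}{\abgoal{i}}$ achieving $\abcost{i}(\gabpath) + \abresid{i}(\gabpath) = \abopt{i}(\abst{i}^2, \abgoal{i}) = \hi{t^2}{\goal}$. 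Concatenating $\abpath{i}$ with $\gabpath$ yields an abstract path from $\abst{i}^1$ to $\abgoal{i}$; since both the primary cost and the residual cost are additive over edges, they are additive over concatenation, so the concatenated path has total cost equal to $(\abcost{i}(\abpath{i}) + \abresid{i}(\abpath{i})) + (\abcost{i}(\gabpath) + \abresid{i}(\gabpath)) \le \gopt(t^1,t^2) + \hi{t^2}{\goal}$. Since $\hi{t^1}{\goal} = \abopt{i}(\abst{i}^1, \abgoal{i})$ is the minimum of total cost over all abstract paths from $\abst{i}^1$ to $\abgoal{i}$, and we have produced one such path with total cost at most $\gopt(t^1,t^2) + \hi{t^2}{\goal}$, the inequality follows.

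I do not expect any serious obstacle here; the argument is essentially the observation that ``optimal abstract cost'' satisfies the triangle inequality, which follows from closure of abstract paths under concatenation together with Lemma \ref{LEM.PathCost}. The one point that needs a little care is the degenerate case where $\gpaths{t^1}{t^2}$ is empty, so that $\gopt(t^1,t^2) = \infty$ (or is undefined); in that case the claimed inequality holds trivially since the right-hand side is $\infty$, so the interesting content is entirely in the case where a path from $t^1$ to $t^2$ exists. A second minor point is verifying that the concatenation of $\abpath{i}$ (ending at $\abst{i}^2$) and $\gabpath$ (starting at $\abst{i}^2$) is indeed a well-formed path in $\abspace{i}$ — this is immediate from the definition of path as a sequence of consecutive edges. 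With these remarks, the proof is short and direct.
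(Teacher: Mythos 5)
Your proof is correct and takes essentially the same approach as the paper: the paper states the triangle inequality $\abopt{i}(t^1_i,\abgoal{i}) \le \abopt{i}(t^1_i,t^2_i) + \abopt{i}(t^2_i,\abgoal{i})$ abstractly and then bounds $\abopt{i}(t^1_i,t^2_i) \le \gopt(t^1,t^2)$ via Lemma \ref{LEM.CversusCR}, while you simply unroll the same argument into an explicit concatenation of a witness path from Lemma \ref{LEM.PathCost} with an optimal abstract path. Your remark about the degenerate case $\gpaths{t^1}{t^2} = \emptyset$ is a reasonable extra care the paper does not bother with.
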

\begin{proof}
By the definition of $\abopt{i}$ as a minimization and the definition of $\hitg$, it follows that
$\hi{t^1}{\goal} = \abopt{i}(t^1_i,\abgoal{i}) \le \abopt{i}(t^1_i, t^2_i) +  \abopt{i}(t^2_i, g_i)
=  \abopt{i}(t^1_i, t^2_i) +\hi{t^2}{g}$.

To complete the proof, we observe that
by Lemma \ref{LEM.CversusCR}, $\gopt(t^1, t^2) \ge \hi{t^1}{t^2} = \abopt{i}(t^1_i, t^2_i)$.
\end{proof}

\begin{definition}
\label{DEF.hmax}
The $\hmax$ heuristic
from state $t$ to state $\goal$ defined by an abstraction system  $\absys$
is
\begin{eqnarray*}
\hmaxtg =\max_{i=1}^k \hitg
\end{eqnarray*}
\end{definition}
From Lemmas \ref{LEM.CversusCR} and \ref{LEM.individualConsistent}
it immediately follows that $\hmax$ is admissible and consistent.

\subsection{Additive Abstractions}
\label{addabstraction}

In this section, we formalize the
notion of ``additive abstraction"
that was introduced intuitively in Section \ref{additivePDB}.
The example there showed that  $\haddtg$, the sum of the
heuristics for state $t$ defined by multiple abstractions,
was admissible provided the cost functions in the abstract spaces
only counted the ``distinguished moves".
In our formal framework, the ``cost of distinguished moves" is captured
by the notion of primary cost.
\begin{definition}
For any pair of states $\gst, \goal \in \gstates$
the {\em additive heuristic}  given an abstraction system
is defined to be

\begin{equation*}\label{haddDEF}
\haddtg=\sum_{i=1}^k \abminc{i}(\abst{i},\abgoal{i}).
\end{equation*}

\noindent
where
\[ \abminc{i}(\abst{i},\abgoal{i}) =
\min_{\gabpath \in \abpathset{i}{\abst{i}}{\abgoal{i}} }  \abcost{i}(\gabpath)
\]
is the minimum primary cost of a path in the abstract space from $\abst{i}$ to $\abgoal{i}$.
\end{definition}

In Figure \ref{fig-infeasible},
for example,
$C^*_1(A_1,g_1)=9$ and $C^*_2(A_2,g_2)=5$ because the minimum
number of distinguished moves to reach $g_1$ from $A_1$ is $9$
and the minimum number of distinguished moves to reach $g_2$ from $A_2$ is $5$.


Intuitively, $\hadd$ will be admissible if the cost of edge
$\gpr$ in the original space is divided among the abstract edges
$\abpr{i}$ that correspond to $\gpr$, as is done by the ``cost-splitting"
and ``location-based" methods for defining abstract costs that were
introduced at the end of Section \ref{additivePDB}.
This leads to the following formal definition.
\begin{definition}
\label{DEF.ADDITIVEABS}
An
abstraction system $\absys$ is {\em additive} if $ \forall \gpr
\in \gpairs, \sum_{i=1}^k \abcost{i}(\abpr{i}) \le \gcost(\gpr) $.
\end{definition}

The following prove that $\hadd$ is admissible (Lemma
\ref{LEM.ADDLB}) and consistent (Lemma \ref{LEM.consistent}) when
the abstraction system $\absys$ is {\em additive}.

\begin{lemma}
\label{LEM.ADDLB}
If $\absys$ is additive then
$\haddtg \le \gopt(\gst,\goal)$
for all $ t ,\goal \in \gstates$.
\end{lemma}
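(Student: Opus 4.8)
The plan is to bound $\haddtg$ from above by the cost of a single minimum-cost path in $\gspace$, and to split that cost edge-by-edge among the $k$ abstractions using the additivity hypothesis. Fix $\gst,\goal\in\gstates$. If $\gpaths{\gst}{\goal}=\emptyset$ then $\gopt(\gst,\goal)$ is (by the usual convention on an empty minimum) $+\infty$ and there is nothing to prove, so I would assume a path exists and let $\gpath=\tuple{\gprj{1},\dots,\gprj{n}}\in\gpaths{\gst}{\goal}$ be one achieving $\gcost(\gpath)=\gopt(\gst,\goal)$.

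First I would apply Lemma~\ref{LEM.PathCost} to $\gpath$: for each $i$ the image sequence $\abpath{i}=\tuple{\abprij{i}{1},\dots,\abprij{i}{n}}$ is a path from $\abst{i}$ to $\abgoal{i}$ in $\abspace{i}$, hence $\abpath{i}\in\abpathset{i}{\abst{i}}{\abgoal{i}}$. Since $\abminc{i}(\abst{i},\abgoal{i})$ is by definition the minimum of $\abcost{i}(\cdot)$ over exactly that set of paths, this gives $\abminc{i}(\abst{i},\abgoal{i})\le\abcost{i}(\abpath{i})=\sum_{j=1}^{n}\abcost{i}(\abprij{i}{j})$. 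Summing over $i=1,\dots,k$ and interchanging the two finite sums then yields $\haddtg\le\sum_{i=1}^{k}\sum_{j=1}^{n}\abcost{i}(\abprij{i}{j})=\sum_{j=1}^{n}\bigl(\sum_{i=1}^{k}\abcost{i}(\abprij{i}{j})\bigr)$.

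Next I would invoke Definition~\ref{DEF.ADDITIVEABS} once for each original edge $\gprj{j}$: additivity says $\sum_{i=1}^{k}\abcost{i}(\abprij{i}{j})\le\gcost(\gprj{j})$. Summing over $j$ gives $\haddtg\le\sum_{j=1}^{n}\gcost(\gprj{j})=\gcost(\gpath)=\gopt(\gst,\goal)$, which is the claim. There is no substantial obstacle here; the argument is a short chain of inequalities. The one point that warrants a moment's care is that $\abminc{i}$ is a minimum over \emph{all} abstract paths from $\abst{i}$ to $\abgoal{i}$, including ones that are not images of any path in $\gspace$ --- but that can only lower $\abminc{i}$, and all that is needed is that the particular image path $\abpath{i}$ belongs to the set being minimized, which is precisely what Lemma~\ref{LEM.PathCost} supplies. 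It is also worth highlighting how this differs from the proof of Lemma~\ref{LEM.CversusCR}: there, condition~\eqref{ADMcondition} was used to charge $\abcost{i}(\abprij{i}{j})+\abresid{i}(\abprij{i}{j})$ against $\gcost(\gprj{j})$ for a single $i$, whereas here the residual costs are discarded and the overlap across the $k$ abstractions is paid for instead by the additivity inequality $\sum_{i}\abcost{i}(\abprij{i}{j})\le\gcost(\gprj{j})$.
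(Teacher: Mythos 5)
Your proof is correct and follows essentially the same route as the paper's: fix a minimum-cost path, apply additivity edge-by-edge, interchange the two finite sums, and bound each $\abminc{i}$ by the primary cost of the image path (which Lemma~\ref{LEM.PathCost} guarantees lies in the set being minimized). The only differences are presentational—you run the chain of inequalities from $\haddtg$ up to $\gopt(\gst,\goal)$ rather than the reverse, and you add a harmless remark about the empty-path case.
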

\begin{proof}
Assume that $ \gopt(\gst,\goal) = \gcost(\gpath) $,
where $\gpath = \tuple{ \gprj{1}, \dots, \gprj{n}}  \in \gpaths{\gst}{\goal} $.
Therefore, $\gopt(\gst,\goal)= \sum_{j=1}^n \gcost(\gprj{j})$. Since  $\absys$ is additive,  it follows by definition that
\begin{eqnarray*}
\sum_{j=1}^n  \gcost(\gprj{j}) & \ge & \sum_{j=1}^n \sum_{i=1}^k \abcost{i}(\abpr{i}^j)
=  \sum_{i=1}^k \sum_{j=1}^n  \abcost{i}(\abpr{i}^j)\\
 & \ge & \sum_{i=1}^k \abminc{i}(\abst{i},\abgoal{i}) = \haddtg
\end{eqnarray*}
where the last line follows from the definitions of  $\abminc{i}$ and $\hadd$.
\end{proof}

\begin{lemma}
\label{LEM.consistent}
If $\absys$ is additive then
$\hadda{t^1}{\goal} \le \gopt(t^1, t^2) +  \hadda{t^2}{\goal}$
for all $t^1, t^2, \goal  \in \gstates$.
\end{lemma}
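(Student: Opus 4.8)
The plan is to obtain this inequality from two ingredients: a triangle inequality satisfied by each $\abminc{i}$ inside its own abstract space, and Lemma \ref{LEM.ADDLB} applied to the pathfinding problem with start $t^1$ and goal $t^2$.

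First I would establish, for each $i \in \{1,\dots,k\}$, that
\[
\abminc{i}(t^1_i, g_i) \;\le\; \abminc{i}(t^1_i, t^2_i) + \abminc{i}(t^2_i, g_i).
\]
Let $\gabpath^1$ be a path in $\abspace{i}$ from $t^1_i$ to $t^2_i$ of minimum primary cost, and $\gabpath^2$ a path from $t^2_i$ to $g_i$ of minimum primary cost. Their concatenation is a path in $\abspace{i}$ from $t^1_i$ to $g_i$, and since the primary cost of a path is by definition the sum of the primary costs of its edges, this concatenated path has primary cost $\abcost{i}(\gabpath^1) + \abcost{i}(\gabpath^2) = \abminc{i}(t^1_i, t^2_i) + \abminc{i}(t^2_i, g_i)$; as $\abminc{i}(t^1_i, g_i)$ is a minimum over all paths from $t^1_i$ to $g_i$, the displayed inequality follows. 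Next I would sum over $i = 1,\dots,k$ and unfold the definition of $\hadd$:
\[
\hadda{t^1}{\goal} = \sum_{i=1}^k \abminc{i}(t^1_i, g_i)
\;\le\; \sum_{i=1}^k \abminc{i}(t^1_i, t^2_i) + \sum_{i=1}^k \abminc{i}(t^2_i, g_i)
= \hadda{t^1}{t^2} + \hadda{t^2}{\goal}.
\]
Finally, because $\absys$ is additive, Lemma \ref{LEM.ADDLB} (with $t^2$ playing the role of the goal) gives $\hadda{t^1}{t^2} \le \gopt(t^1, t^2)$; substituting this into the previous line completes the argument.

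I do not expect a genuine obstacle. The only points needing a little care are the degenerate cases and the additivity of primary cost under concatenation. If no path from $t^1$ to $t^2$ exists in $\gspace$ the statement is vacuous, and if some $\abspace{i}$ contains no path from $t^2_i$ to $g_i$ then the right-hand side is infinite, so both are trivial; in the remaining case Lemma \ref{LEM.PathCost} ensures the abstract paths used in the concatenation argument all exist. That primary cost is additive under concatenation of paths is immediate from its definition as a sum over edges. So the hard part, such as it is, is merely bookkeeping.
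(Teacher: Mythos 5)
Your proposal is correct and follows essentially the same route as the paper's own proof: the triangle inequality for each $\abminc{i}$ (which the paper asserts and you justify by concatenation of minimum-primary-cost paths), summation over $i$, and an application of Lemma \ref{LEM.ADDLB} to the pair $(t^1,t^2)$ to bound $\sum_{i=1}^k \abminc{i}(t^1_i,t^2_i)$ by $\gopt(t^1,t^2)$. The only difference is that you spell out the concatenation argument and the degenerate cases, which the paper leaves implicit.
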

\begin{proof}
$\abminc{i}(t^1_i,\abgoal{i})$
obeys the triangle inequality:
$\abminc{i}(t^1_i, \abgoal{i}) \le
\abminc{i}(t^1_i,t^2_i) +  \abminc{i}(t^2_i, \abgoal{i})$
for all $t^1, t^2,\goal  \in \gstates$.
It follows that
$\sum_{i=1}^k \abminc{i}(t^1_i, \abgoal{i}) \le
\sum_{i=1}^k \abminc{i}(t^1_i,t^2_i) +  \sum_{i=1}^k \abminc{i}(t^2_i, \abgoal{i})$.

Because $\sum_{i=1}^k \abminc{i}(t^1_i, \abgoal{i})= \hadda{t^1}{\goal}$ and $\sum_{i=1}^k \abminc{i}(t^2_i, \abgoal{i})=\hadda{t^2}{\goal}$, it follows that $\hadda{t^1}{\goal} \le \sum_{i=1}^k \abminc{i}(t^1_i,t^2_i) +  \hadda{t^2}{\goal}$.

Since  $\absys$ is additive,  by Lemma \ref{LEM.ADDLB},  
$\gopt(t^1, t^2) \ge
\sum_{i=1}^k \abminc{i}(t^1_i,t^2_i)$.

Hence
 $\hadda{t^1}{\goal} \le \gopt(t^1, t^2) +  \hadda{t^2}{\goal}$
for all $t^1, t^2, \goal \in \gstates$.
\end{proof}

We now develop a simple test that has important consequences for
additive heuristics.
Define $\primeset{i}{\abst{i}}{\abgoal{i}} =
\{ \gabpath  \setdelim\gabpath \in \abpathset{i}{\abst{i}}{\abgoal{i}} $ and
$\abcost{i}(\gabpath) = \abminc{i}(\abst{i},\abgoal{i})
\}$, the set of abstract paths from $\abst{i}$ to $\abgoal{i}$ whose
primary cost is minimal.
\begin{definition}
\label{DEF.CONDOPTRESCOST}
The {\em conditional optimal
residual cost} is the minimum residual cost among the paths in
$\primeset{i}{\abst{i}}{\abgoal{i}}$:
\[
\abminr{i}(\abst{i},\abgoal{i}) =
\min_{ \gabpath
\in \primeset{i}{\abst{i}}{\abgoal{i}}}
\abresid{i}(\gabpath)
\]
\end{definition}
Note that the value of
($\abminc{i}(\abst{i},\abgoal{i}) +
\abminr{i}(\abst{i},\abgoal{i})$) is sometimes, but not always,
equal to the optimal abstract cost $\abopt{i}(\abst{i},\abgoal{i})$.
In Figure \ref{fig-infeasible}, for example, $\abopt{1}(A_1,g_1)=16$
(a path with this cost is shown in Figure \ref{fig-maxPDB}) and
$\abminc{1}(A_1,g_1) + \abminr{1}(A_1,g_1)=18$, while
$\abminc{2}(A_2,g_2) + \abminr{2}(A_2, g_2)=\abopt{2}(A_2,g_2)=12$.
\label{infeasibleDef} As the following lemmas show, it is possible
to draw important conclusions about $\hadd$ by comparing its value
to ($\abminc{i}(\abst{i},\abgoal{i}) +
\abminr{i}(\abst{i},\abgoal{i})$).

\begin{lemma}
\label{LEM.ADDBETTER}
Let $\absys$ be any additive abstraction system
and let $t,\goal \in T$ be any states.
If
$\haddtg \ge
\abminc{j}(\abst{j},\abgoal{j}) + \abminr{j}(\abst{j}, \abgoal{j})$
for all $j  \in \{ 1, \dots, k \}$,
then $ \haddtg \geq \hmaxtg $.
\end{lemma}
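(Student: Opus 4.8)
The plan is to reduce the inequality $\haddtg \ge \hmaxtg$ to a per‑abstraction comparison. Recall that $\hmaxtg = \max_{j=1}^{k}\hitg$ (Definition~\ref{DEF.hmax}) and, by Definition~\ref{DEF.heuristic}, $\hitg = \abopt{j}(\abst{j},\abgoal{j})$; so it suffices to prove that $\haddtg \ge \abopt{j}(\abst{j},\abgoal{j})$ for every $j \in \{1,\dots,k\}$, since then $\haddtg$ dominates the maximum over $j$.

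First I would fix an arbitrary $j$ and establish the key auxiliary fact $\abopt{j}(\abst{j},\abgoal{j}) \le \abminc{j}(\abst{j},\abgoal{j}) + \abminr{j}(\abst{j},\abgoal{j})$. By Definition~\ref{DEF.CONDOPTRESCOST} there is a path $\gabpath^{*} \in \primeset{j}{\abst{j}}{\abgoal{j}}$ with $\abresid{j}(\gabpath^{*}) = \abminr{j}(\abst{j},\abgoal{j})$, and by the very definition of $\primeset{j}{\abst{j}}{\abgoal{j}}$ this same path also satisfies $\abcost{j}(\gabpath^{*}) = \abminc{j}(\abst{j},\abgoal{j})$. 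Since $\primeset{j}{\abst{j}}{\abgoal{j}} \subseteq \abpathset{j}{\abst{j}}{\abgoal{j}}$, the path $\gabpath^{*}$ is one of the paths over which $\abopt{j}(\abst{j},\abgoal{j})$ is minimized (Definition~\ref{DEF.ABSOPT}), hence $\abopt{j}(\abst{j},\abgoal{j}) \le \abcost{j}(\gabpath^{*}) + \abresid{j}(\gabpath^{*}) = \abminc{j}(\abst{j},\abgoal{j}) + \abminr{j}(\abst{j},\abgoal{j})$.

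Then I would simply chain this with the hypothesis of the lemma: $\haddtg \ge \abminc{j}(\abst{j},\abgoal{j}) + \abminr{j}(\abst{j},\abgoal{j}) \ge \abopt{j}(\abst{j},\abgoal{j}) = \hitg$. As $j$ was arbitrary, taking the maximum over $j$ yields $\haddtg \ge \hmaxtg$, completing the proof.

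There is essentially no obstacle here: the only point requiring care is getting the direction of the two nested minimizations right — minimizing residual cost over the \emph{smaller} set $\primeset{j}{\abst{j}}{\abgoal{j}}$ of primary‑optimal paths can only overshoot $\abopt{j}$, never undershoot it. It is also worth noting that additivity of $\absys$ is not actually used in this argument; it appears in the statement only for consistency with the companion Lemma~\ref{LEM.INFEASIBLE} and because $\hadd$ is the heuristic of interest in the additive setting, so I would not attempt to invoke Definition~\ref{DEF.ADDITIVEABS} anywhere in the proof.
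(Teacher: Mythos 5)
Your proposal is correct and follows essentially the same route as the paper: both reduce the claim to showing $\abminc{j}(\abst{j},\abgoal{j}) + \abminr{j}(\abst{j},\abgoal{j}) \ge \abopt{j}(\abst{j},\abgoal{j})$ for each $j$ and then chain this with the hypothesis, with your version merely spelling out the witness path argument that the paper compresses into ``by the definition of $\abopt{i}$.'' Your side remark that additivity is never invoked is also accurate.
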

\begin{proof}
By the definition of $\abopt{i}(\abst{i},\abgoal{i})$,
$\forall j \in \{ 1, \dots, k \}, \abminc{j}(\abst{j},\abgoal{j}) + \abminr{j}(\abst{j}, \abgoal{j}) \ge \abopt{j}(\abst{j},\abgoal{j})$.
Therefore, $\forall j \in \{ 1, \dots, k \}, \haddtg \geq \abminc{j}(\abst{j},\abgoal{j}) + \abminr{j}(\abst{j}, \abgoal{j}) \geq \abopt{j}(\abst{j},\abgoal{j}) \Rightarrow  \haddtg \geq
\max_{ 1 \le i \le k }
\abopt{i}(\abst{i},\abgoal{i}) = \hmaxtg$.
\end{proof}

\begin{lemma}
\label{LEMCLAIM.INFEASIBLE.1}
 For an additive $\absys$ and
path $\gpath \in \gpaths{\gst}{\goal}$
with $\gcost(\gpath) = \sum_{i=1}^k \abminc{i}(\abst{i},\abgoal{i})$,
$ \abcost{j}(\abpath{j}) = \abminc{j}(\abst{j},\abgoal{j})$
for all $j  \in \{ 1, \dots, k \}$.
\end{lemma}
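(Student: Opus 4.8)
The plan is to sandwich the sum $\sum_{i=1}^k \abcost{i}(\abpath{i})$ between two copies of $\sum_{i=1}^k \abminc{i}(\abst{i},\abgoal{i})$ and conclude that every inequality involved is tight, which forces termwise equality.

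First I would establish the lower bound. By condition \eqref{CONNcondition}, exactly as in Lemma~\ref{LEM.PathCost}, the image $\abpath{i} = \abmap{i}(\gpath)$ is an abstract path from $\abst{i}$ to $\abgoal{i}$, hence $\abpath{i} \in \abpathset{i}{\abst{i}}{\abgoal{i}}$. Since $\abminc{i}(\abst{i},\abgoal{i})$ is by definition the minimum primary cost over that path set, $\abcost{i}(\abpath{i}) \ge \abminc{i}(\abst{i},\abgoal{i})$ for every $i$, and summing gives $\sum_{i=1}^k \abcost{i}(\abpath{i}) \ge \sum_{i=1}^k \abminc{i}(\abst{i},\abgoal{i})$.

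Next I would establish the upper bound, which is just the edge-by-edge computation from the proof of Lemma~\ref{LEM.ADDLB} applied to this particular $\gpath$ instead of an optimal path. Writing $\gpath = \tuple{\gprj{1},\dots,\gprj{n}}$, additivity (Definition~\ref{DEF.ADDITIVEABS}) gives $\sum_{i=1}^k \abcost{i}(\abpr{i}^j) \le \gcost(\gprj{j})$ for each $j$; summing over $j$ and exchanging the two finite sums yields $\sum_{i=1}^k \abcost{i}(\abpath{i}) = \sum_{j=1}^n \sum_{i=1}^k \abcost{i}(\abpr{i}^j) \le \sum_{j=1}^n \gcost(\gprj{j}) = \gcost(\gpath)$. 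Combining this with the lower bound and the hypothesis $\gcost(\gpath) = \sum_{i=1}^k \abminc{i}(\abst{i},\abgoal{i})$ gives the chain $\sum_{i=1}^k \abminc{i}(\abst{i},\abgoal{i}) \le \sum_{i=1}^k \abcost{i}(\abpath{i}) \le \gcost(\gpath) = \sum_{i=1}^k \abminc{i}(\abst{i},\abgoal{i})$, so $\sum_{i=1}^k \abcost{i}(\abpath{i}) = \sum_{i=1}^k \abminc{i}(\abst{i},\abgoal{i})$. Since each of the $k$ summands on the left is already known to be at least the corresponding summand on the right, and the index set is finite, none of them can be strictly larger; hence $\abcost{j}(\abpath{j}) = \abminc{j}(\abst{j},\abgoal{j})$ for every $j$, as claimed.

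I do not expect any real obstacle here: the argument is the standard ``a sum of nonnegative slacks that totals zero must be identically zero'' trick. The only points requiring a moment's care are invoking condition \eqref{CONNcondition} to be sure each $\abpath{j}$ is a legitimate member of the path set over which $\abminc{j}$ minimises, and noting that finiteness of $\{1,\dots,k\}$ is what turns the equality of totals into termwise equality.
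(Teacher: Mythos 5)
Your proof is correct and uses exactly the same two ingredients as the paper's: additivity bounds $\sum_{i=1}^k \abcost{i}(\abpath{i})$ above by $\gcost(\gpath)$, and membership of each $\abpath{i}$ in $\abpathset{i}{\abst{i}}{\abgoal{i}}$ bounds each term below by $\abminc{i}(\abst{i},\abgoal{i})$. The paper merely packages this as a proof by contradiction (one term too large forces another below the minimum), whereas you present it directly as a sandwich argument; if anything, your version makes explicit the additivity step that the paper's ``there must exist some $i_2$'' leaves implicit.
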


\begin{proof}
Suppose for a contradiction that there exists some $i_1$,
such that
$ \abcost{i_1}(\abpath{i_1}) > \abminc{i_1}(\abst{i_1},\abgoal{i_1}) $.
Then because $\gcost(\gpath) =
\sum_{i=1}^k \abminc{i}(\abst{i},\abgoal{i})$,
there must exist some $i_2$, such that
$ \abcost{i_2}(\abpath{i_2}) < \abminc{i_2}(\abst{i_2},\abgoal{i_2}) $,
which contradicts the definition of $ \abminc{i}$.
Therefore, such an $i_1$ does not exist and
$ \abcost{j}(\abpath{j}) = \abminc{j}(\abst{j},\abgoal{j})$
for all $j  \in \{ 1, \dots, k \}$.
\end{proof}

\begin{lemma}
\label{LEMCLAIM.INFEASIBLE.2}
For an additive $\absys$ and
a path $\gpath \in \gpaths{\gst}{\goal}$
with $\gcost(\gpath) = \sum_{i=1}^k \abminc{i}(\abst{i},\abgoal{i})$,
$ \abresid{i}(\abpath{i}) \ge \abminr{i}(\abst{i},\abgoal{i})$
for all $i  \in \{ 1, \dots, k \}$.
\end{lemma}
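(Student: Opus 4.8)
The plan is to combine the previous lemma with the definition of the conditional optimal residual cost. By Lemma \ref{LEMCLAIM.INFEASIBLE.1}, the hypothesis $\gcost(\gpath) = \sum_{i=1}^k \abminc{i}(\abst{i},\abgoal{i})$ already forces $\abcost{i}(\abpath{i}) = \abminc{i}(\abst{i},\abgoal{i})$ for every $i$. So the image path $\abpath{i}$ is a path from $\abst{i}$ to $\abgoal{i}$ in $\abspace{i}$ (this is guaranteed by Lemma \ref{LEM.PathCost}) whose primary cost equals the minimum primary cost. By the definition of $\primeset{i}{\abst{i}}{\abgoal{i}}$, this means $\abpath{i} \in \primeset{i}{\abst{i}}{\abgoal{i}}$.

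Once $\abpath{i}$ is known to lie in $\primeset{i}{\abst{i}}{\abgoal{i}}$, the conclusion is immediate from Definition \ref{DEF.CONDOPTRESCOST}: $\abminr{i}(\abst{i},\abgoal{i})$ is defined as the minimum of $\abresid{i}(\gabpath)$ over all $\gabpath \in \primeset{i}{\abst{i}}{\abgoal{i}}$, so in particular $\abresid{i}(\abpath{i}) \ge \abminr{i}(\abst{i},\abgoal{i})$. Since this holds for each $i \in \{1,\dots,k\}$, the lemma follows.

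There is essentially no obstacle here — the real work was done in Lemma \ref{LEMCLAIM.INFEASIBLE.1}. The only point that needs a moment's care is checking that $\abpath{i}$ is genuinely an element of $\abpathset{i}{\abst{i}}{\abgoal{i}}$ (so that it is eligible to be in $\primeset{i}{\abst{i}}{\abgoal{i}}$), but this is exactly the content of Lemma \ref{LEM.PathCost}, which states that the image of a path from $u^1$ to $u^2$ is a path from $u^1_i$ to $u^2_i$. So the proof is just: apply Lemma \ref{LEMCLAIM.INFEASIBLE.1} to get minimal primary cost, conclude membership in $\primeset{i}{\abst{i}}{\abgoal{i}}$, and read off the residual-cost inequality from the definition of $\abminr{i}$.
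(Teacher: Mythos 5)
Your proof is correct and follows exactly the paper's argument: invoke Lemma \ref{LEMCLAIM.INFEASIBLE.1} to conclude $\abpath{i} \in \primeset{i}{\abst{i}}{\abgoal{i}}$, then read off the inequality from the definition of $\abminr{i}$ as a minimum over that set. The extra remark that $\abpath{i}$ is genuinely a path from $\abst{i}$ to $\abgoal{i}$ (via Lemma \ref{LEM.PathCost}) is a small but welcome addition of care that the paper leaves implicit.
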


\begin{proof}
Following Lemma~\ref{LEMCLAIM.INFEASIBLE.1} and the definition of $\primeset{i}{\abst{i}}{\abgoal{i}}$,
$\abpath{i} \in \primeset{i}{\abst{i}}{\abgoal{i}}$
for all $i  \in \{ 1, \dots, k \}$.
Because $\abminr{i}(\abst{i},\abgoal{i})$ is the smallest residual
cost of paths in $\primeset{i}{\abst{i}}{\abgoal{i}}$, it follows
that $ \abresid{i}(\abpath{i}) \ge \abminr{i}(\abst{i},\abgoal{i})$.
\end{proof}

\begin{lemma}
\label{LEMCLAIM.INFEASIBLE.3}
For an additive $\absys$ and
a path $\gpath \in \gpaths{\gst}{\goal}$
with $\gcost(\gpath) = \sum_{i=1}^k \abminc{i}(\abst{i},\abgoal{i})$,
$\sum_{i=1}^k \abminc{i}(\abst{i},\abgoal{i})
\ge \abminc{j}(\abst{j},\abgoal{j}) + \abminr{j}(\abst{j},\abgoal{j})$
for all $j  \in \{ 1, \dots, k \}$.
\end{lemma}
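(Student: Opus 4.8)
The plan is to combine Lemma~\ref{LEM.PathCost} with the two immediately preceding lemmas, working one index at a time. Fix an arbitrary $j \in \{1,\dots,k\}$ and consider the image $\abpath{j}$ of the given path $\gpath$ under the abstraction mapping $\abmap{j}$. Lemma~\ref{LEM.PathCost} guarantees that $\abpath{j}$ is a genuine path from $\abst{j}$ to $\abgoal{j}$ in $\abspace{j}$ and that its two costs together are bounded by the cost of $\gpath$, namely $\abcost{j}(\abpath{j}) + \abresid{j}(\abpath{j}) \le \gcost(\gpath)$.

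Next I would use the hypothesis $\gcost(\gpath) = \sum_{i=1}^k \abminc{i}(\abst{i},\abgoal{i})$ to rewrite the left side of this inequality, and invoke the two preceding lemmas to evaluate the right side. Lemma~\ref{LEMCLAIM.INFEASIBLE.1}, whose hypothesis is exactly the assumed cost of $\gpath$, gives $\abcost{j}(\abpath{j}) = \abminc{j}(\abst{j},\abgoal{j})$, and Lemma~\ref{LEMCLAIM.INFEASIBLE.2} gives $\abresid{j}(\abpath{j}) \ge \abminr{j}(\abst{j},\abgoal{j})$. Chaining these,
\[
\sum_{i=1}^k \abminc{i}(\abst{i},\abgoal{i}) = \gcost(\gpath) \ge \abcost{j}(\abpath{j}) + \abresid{j}(\abpath{j}) = \abminc{j}(\abst{j},\abgoal{j}) + \abresid{j}(\abpath{j}) \ge \abminc{j}(\abst{j},\abgoal{j}) + \abminr{j}(\abst{j},\abgoal{j}),
\]
which is the desired inequality; since $j$ was arbitrary it holds for all $j \in \{1,\dots,k\}$.

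There is no genuine obstacle here: the statement is a short bookkeeping argument that repackages Lemmas~\ref{LEM.PathCost}, \ref{LEMCLAIM.INFEASIBLE.1}, and \ref{LEMCLAIM.INFEASIBLE.2}. The only step that requires any thought is recognizing that the bridge between the concrete cost $\gcost(\gpath)$ and the abstract primary-plus-residual cost $\abcost{j}(\abpath{j}) + \abresid{j}(\abpath{j})$ of its image is precisely Lemma~\ref{LEM.PathCost} (the consequence of conditions~\eqref{CONNcondition} and~\eqref{ADMcondition}); after that, substitution of the hypothesis and the two earlier lemmas finishes the proof immediately.
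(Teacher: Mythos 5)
Your proof is correct and follows essentially the same route as the paper's: both apply Lemma~\ref{LEM.PathCost} to bound $\gcost(\gpath)$ below by $\abcost{j}(\abpath{j}) + \abresid{j}(\abpath{j})$, then substitute via Lemmas~\ref{LEMCLAIM.INFEASIBLE.1} and~\ref{LEMCLAIM.INFEASIBLE.2} and the hypothesis on $\gcost(\gpath)$. The chained display you give is just a slightly more explicit writing of the paper's argument.
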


\begin{proof}
By Lemma \ref{LEM.PathCost},
$\gcost(\gpath) \ge \abcost{j}(\abpath{j}) +\abresid{j}(\abpath{j})$
for all $j  \in \{ 1, \dots, k \}$.
By Lemma~\ref{LEMCLAIM.INFEASIBLE.1}
$\abcost{j}(\abpath{j}) = \abminc{j}(\abst{j},\abgoal{j}) $,
and by Lemma~\ref{LEMCLAIM.INFEASIBLE.2}
$\abresid{j}(\abpath{j}) \ge \abminr{j}(\abst{j},\abgoal{j})$.
Therefore $ \gcost(\gpath) \ge
\abminc{j}(\abst{j},\abgoal{j}) + \abminr{j}(\abst{j},\abgoal{j})$,
and the lemma follows from the premise that
$\gcost(\gpath) = \sum_{i=1}^k \abminc{i}(\abst{i},\abgoal{i})$.
\end{proof}

\begin{lemma}
\label{LEM.INFEASIBLE}
Let $\absys$ be any additive abstraction system
and let $t,\goal \in T$ be any states.
If
$\haddtg <
\abminc{j}(\abst{j},\abgoal{j}) + \abminr{j}(\abst{j}, \abgoal{j})$
for some $j  \in \{ 1, \dots, k \}$,
then  $\haddtg \ne OPT(t,g)$.
\end{lemma}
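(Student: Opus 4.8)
The plan is to prove the contrapositive: assume $\haddtg = \gopt(\gst,\goal)$ and show that then $\haddtg \ge \abminc{j}(\abst{j},\abgoal{j}) + \abminr{j}(\abst{j}, \abgoal{j})$ for every $j$. This is precisely the situation that Lemmas \ref{LEMCLAIM.INFEASIBLE.1}, \ref{LEMCLAIM.INFEASIBLE.2} and \ref{LEMCLAIM.INFEASIBLE.3} were set up to handle, so the real work has already been done; I just need to connect the hypothesis of Lemma \ref{LEM.INFEASIBLE} to the hypothesis of those three lemmas.

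First I would pick an optimal path $\gpath \in \gpaths{\gst}{\goal}$ realizing $\gopt(\gst,\goal)$, i.e.\ $\gcost(\gpath) = \gopt(\gst,\goal)$. By the contrapositive assumption, $\gopt(\gst,\goal) = \haddtg = \sum_{i=1}^k \abminc{i}(\abst{i},\abgoal{i})$, so in fact $\gcost(\gpath) = \sum_{i=1}^k \abminc{i}(\abst{i},\abgoal{i})$ — which is exactly the premise shared by Lemmas \ref{LEMCLAIM.INFEASIBLE.1}--\ref{LEMCLAIM.INFEASIBLE.3}. Applying Lemma \ref{LEMCLAIM.INFEASIBLE.3} to this path gives $\sum_{i=1}^k \abminc{i}(\abst{i},\abgoal{i}) \ge \abminc{j}(\abst{j},\abgoal{j}) + \abminr{j}(\abst{j},\abgoal{j})$ for all $j$, i.e.\ $\haddtg \ge \abminc{j}(\abst{j},\abgoal{j}) + \abminr{j}(\abst{j},\abgoal{j})$ for all $j$. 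This is the negation of the stated hypothesis "$\haddtg < \abminc{j}(\abst{j},\abgoal{j}) + \abminr{j}(\abst{j},\abgoal{j})$ for some $j$", completing the contrapositive and hence the proof.

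There is essentially no obstacle here: the difficulty was front-loaded into Lemma \ref{LEMCLAIM.INFEASIBLE.1} (an optimal-cost path for the additive heuristic must be primary-cost-optimal in every abstraction, proved by a balancing/contradiction argument) and Lemma \ref{LEMCLAIM.INFEASIBLE.2} (such a path lies in $\primeset{i}{\abst{i}}{\abgoal{i}}$, hence has residual cost at least $\abminr{i}$), which together with Lemma \ref{LEM.PathCost} yield Lemma \ref{LEMCLAIM.INFEASIBLE.3}. The only thing to be careful about is the logical bookkeeping of the contrapositive — making sure that "for some $j$" in the hypothesis correctly negates to "for all $j$" in what Lemma \ref{LEMCLAIM.INFEASIBLE.3} delivers, and that the assumed equality $\haddtg = \gopt(\gst,\goal)$ is used to rewrite $\gcost(\gpath)$ in the form required by those lemmas. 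If one preferred a direct argument instead of the contrapositive, one could instead suppose $\haddtg = \gopt(\gst,\goal)$ for contradiction, derive the same inequality from Lemma \ref{LEMCLAIM.INFEASIBLE.3}, and contradict the given strict inequality; the two formulations are interchangeable.
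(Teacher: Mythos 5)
Your proposal is correct and follows essentially the same route as the paper, which proves Lemma \ref{LEM.INFEASIBLE} as the contrapositive of Lemma \ref{LEMCLAIM.INFEASIBLE.3}. You simply spell out the (correct and worthwhile) bookkeeping the paper leaves implicit: instantiating an optimal path, rewriting its cost as $\sum_{i=1}^k \abminc{i}(\abst{i},\abgoal{i})$ via the assumed equality, and negating ``for some $j$'' to ``for all $j$''.
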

\begin{proof}
This lemma follows directly as
the contrapositive of Lemma~\ref{LEMCLAIM.INFEASIBLE.3}.
\end{proof}

Lemma \ref{LEM.ADDBETTER} gives a condition under which $\hadd$ is
guaranteed to be at least as large as $\hmax$ for a specific states $t$ and $g$.
If this condition holds for a large fraction of the state space $T$,
one would expect that search using $\hadd$ to be at least as fast as,
and possibly faster than, search using $\hmax$.
This will be seen in the experiments reported in Section \ref{applications}.
The opposite
is not true in general,  i.e., failing
this condition does not imply that $\hmax$ will
result in faster search than $\hadd$.
However, as Lemma \ref{LEM.INFEASIBLE} shows, there is an
interesting consequence when this condition fails for state $t$:
we know that the value returned by $\hadd$ for $t$ is not the
true cost to reach the goal from $t$.
Detecting this is useful because it allows the heuristic value
to be increased without risking it becoming inadmissible.
Section \ref{infeasible} explores this in detail.

\subsection{Relation to Previous Work}

The aim of the preceding formal definitions is to identify
fundamental properties that guarantee that abstractions will give rise
to admissible, consistent heuristics.
We have shown that the following two conditions
guarantee that the heuristic defined by an abstraction
is admissible and consistent
\begin{eqnarray*}
(P1) &  & \hspace{0.7cm} \forall (u,v) \in \gpairs, (\abmap{i}(u),\abmap{i}(v)) \in \gpairs_{i}  \label{P1}\\
(P2) &  & \hspace{0.7cm} \forall \gpr \in \gpairs, \gcost(\gpr) \ge \abcost{i}(\abpr{i}) + \abresid{i}(\abpr{i}) \label{P2}
\end{eqnarray*}

\noindent
and that a third condition
\begin{eqnarray*}
(P3) & & \hspace{0.7cm} \forall \gpr \in \gpairs, \gcost(\gpr) \ge \sum_{i=1}^k \abcost{i}(\abpr{i}) \hspace{0.7cm}
 \label{P3}
\end{eqnarray*}

\noindent
guarantees that $\haddtg$ is admissible and consistent.

Previous work has focused on defining abstraction and additivity for
specific ways of representing states and transition functions.
These are important contributions because ultimately one needs
computationally effective ways of defining the abstract state spaces,
abstraction mappings, and cost functions that our theory takes
as given.
The importance of our contribution is that it should make future
proofs of admissibility, consistency, and additivity easier, because
one will only need to show that a particular method for defining abstractions
satisfies the
three preceding conditions.  These are generally very simple conditions
to demonstrate, as we will now do for several methods for
defining abstractions and additivity that currently exist in the literature.


\subsubsection{Previous Definitions of Abstraction}
\label{previousAbstractions}

The use of abstraction to create heuristics began
in the late 1970s and was popularized
in Pearl's landmark book on heuristics \cite{PearlHeuristics}.
Two abstraction methods were identified at that time:
``relaxing" a state space definition by dropping operator
preconditions \cite{gaschnig79,guida79,PearlHeuristics,valtorta1984},
and ``homomorphic" abstractions \cite{AItheory,kibler}.
These early notions of abstraction
were unified and extended
by \citeA{mostow89}
and \citeA{machinediscovery}, producing
a formal definition that is the
same as ours in all important respects except for the
concept of ``residual cost" that we have introduced.\footnote{Prieditis's
definition allows an abstraction to expand the set of
goals.  This can be achieved in our definition by mapping non-goal states
in the original space to the same abstract state as the goal.}

Today's two most commonly used
abstraction methods are among the ones
implemented in Prieditis's Absolver II system \cite{machinediscovery}.
The first is ``domain abstraction", which was
independently introduced in the seminal work on
pattern databases \cite{PDB94,PDB98} and then generalized \cite{SARA2000}.
It assumes a state is represented by a set
of state variables, each of which has a set of possible values
called its domain.
An abstraction on states is defined by specifying a mapping from
the original domains to new, smaller domains.
For example, an 8-puzzle state is typically represented by 9 variables, one
for each location in the puzzle, each with the same domain of 9 elements,
one for each tile and one more for the blank.
A domain abstraction that maps all the elements representing the tiles
to the same new element (``don't care") and the blank to a
different element would produce the abstract space shown in
Figure \ref{fig-8puzzle}.
The reason this particular example satisfies property (P1)
is explained in Section \ref{DEFabstraction}.
In general, a domain abstraction will satisfy property (P1) as long as
the conditions that define when
state transitions occur ({\em e.g.}\ operator preconditions)
are guaranteed to be satisfied by the ``don't care" symbol whenever they are
satisfied by one or more of the domain elements that map to ``don't care".
Property (P2) follows immediately from the fact that all state
transitions in the original and abstract spaces have a primary cost of 1.

The other major type of abstraction used today, called ``drop" by
\citeA{machinediscovery}, was independently introduced
for abstracting planning domains represented by grounded (or propositional)
STRIPS operators \cite{planningPDB}.
In a STRIPS representation, a state is represented by the set of
logical atoms that are true in that state,
and
the directed edges between states are represented by a set of operators,
where each operator $a$ is described by three sets of atoms, $P(a)$, $A(a)$, and
$D(a)$.
$P(a)$ lists $a$'s preconditions: $a$ can be applied to state $t$ only if
all the atoms in $P(a)$ are true in $t$ ({\em i.e.}, $P(a) \subseteq  t$).
$A(a)$ and $D(a)$ specify the effects of operator $a$, with
$A(a)$ listing the atoms that become true when $a$ is applied (the ``add" list)
and
$D(a)$ listing the atoms that become false when $a$ is applied (the ``delete" list).
Hence if operator $a$ is applicable to state $t$, the state $u=a(t)$
it produces when applied to $t$ is the set of atoms $u= (t - D(a)) \cup A(a)$.

In this setting, Edelkamp defined an abstraction of a given state space
by specifying a subset of the atoms and 
restricting the abstract state descriptions and operator definitions
to include only atoms in the subset.
Suppose $V_i$ is the subset of the atoms underlying abstraction mapping
$\abmap{i} : \gspace \longrightarrow \abspace{i}$,
where $\gspace$ is the original state space and $\abspace{i}$ is the
abstract state space based on $V_i$.
Two states in $\gspace$ will be mapped to the same abstract state
if and only if they contain the same subset of atoms in $V_i$, {\em i.e.},
$\abmap{i}(t)=\abmap{i}(u)$ iff $t \cap V_i = u \cap V_i$.
This satisfies property (P1) because operator $a$ being applicable
to state $t$ ($P(a) \subseteq  t$) implies
abstract operator $a_i = \abmap{i}(a)$
is applicable to abstract state $t_i$
($P(a) \cap V_i \subseteq  t \cap V_i$)
and the resulting state $a(t) = (t - D(a)) \cup A(a)$
is mapped by $\abmap{i}$ to
$a_i(\abmap{i}(t))$ because set intersection distributes across set subtraction
and union
($V_i \cap ((t - D(a)) \cup A(a)) =
((V_i \cap t) -(V_i \cap D(a))) \cup (V_i \cap A(a))$).
Again, property (P2) follows immediately from the fact that all operators
in the original and abstract spaces have a primary cost of 1.


Recently, Helmert et al.~\citeyear{ICAPS2007Helmert} described a
more general approach to defining abstractions for planning based on
``transition graph abstractions".  A transition graph is
a directed graph in which the arcs have labels, and a transition graph
abstraction is a directed graph homomorphism that preserves
the labels.\footnote{``Homomorphism" here means the standard definition of
a digraph homomorphism (Definition \ref{def-digraph-homo}),
which permits non-surjectivity (as discussed
in Section \ref{DEFabstraction}), as opposed to
Helmert et al.'s definition of ``homomorphism", which does not allow
non-surjectivity.}
Hence, Helmert et al.'s method is a restricted version of our
definition of abstraction and therefore satisfies properties (P1) and (P2).
Helmert et al. make the following interesting observations that are true of
our more general definition of abstractions:
\begin{itemize}
\item the composition of two abstractions is an abstraction.
In other words, if
$\psi : \gspace \longrightarrow \mathbf{A} $
is an abstraction of $\gspace$ and
$\phi : \mathbf{A} \longrightarrow \mathbf{B}$
is an abstraction of $\mathbf{A}$, then
$(\phi \circ \psi ): \gspace \longrightarrow  \mathbf{B}$
is an abstraction of $\gspace$.  This property of abstractions was exploited
by Prieditis~\citeyear{machinediscovery}.

\item the ``product" $\abspace{1} \times \abspace{2}$
of two abstractions, $\abspace{1}$ and $\abspace{2}$,
of $\gspace$ is an
abstraction of $\gspace$, where the state space of the product
is the Cartesian product of the two abstract state spaces, and there
is an edge $\abpr{1 \times 2}$ in the product space
from state $(t_1,t_2)$ to state $(u_1,u_2)$
if there is an edge $\abpr{1}$ from $t_1$ to $u_1$ in $\abspace{1}$
and there is an edge $\abpr{2}$ from $t_2$ to $u_2$ in $\abspace{2}$.
The primary cost of $\abpr{1 \times 2}$ is the minimum of
$\abcost{1}(\abpr{1})$
and
$\abcost{2}(\abpr{2})$ and the residual cost of $\abpr{1 \times 2}$
is taken from the same space as the primary cost.
Because they are working with labelled edges Helmert et al. require
the edge connecting $t_1$ to $u_1$ to have the same label as the
edge connecting $t_2$ to $u_2$; this is called a ``synchronized" product
and is denoted $\abspace{1} \otimes \abspace{2}$ (refer
to Definition 6 defined by \citeA{ICAPS2007Helmert} for the exact definition of
synchronized product).

\end{itemize}

Figure \ref{fig-Helmert} shows the synchronized product, $B$,
of two abstractions, $\abspace{1}$ and $\abspace{2}$,
of the 3-state space $\gspace$ in which the edge labels are $a$ and $b$.
$\abspace{1}$ is derived from $\gspace$ by mapping states $s_1$ and $s_2$
to the same state ($s_{1,2}$), and
$\abspace{2}$ is derived from $\gspace$ by mapping states $s_2$ and $s_3$
to the same state ($s_{2,3}$).
Note that $B$ contains four states, more than the original space.
It is an abstraction of
$\gspace$ because the mapping of original
states $s_1$, $s_2$, and $s_3$ to states
$(s_{1,2},s_1)$ $(s_{1,2},s_{2,3})$ and $(s_3,s_{2,3})$, respectively,
satisfies property (P1), and property (P2) is satisfied automatically
because all edges have a cost of 1.
From this point of view the fourth state in $B$, $(s_3,s_1)$, is redundant
with state $(s_{1,2},s_1)$.
Nevertheless it is a distinct state in the product space.

\begin{figure}[!bht]
\centerline{
\includegraphics[height=6cm]{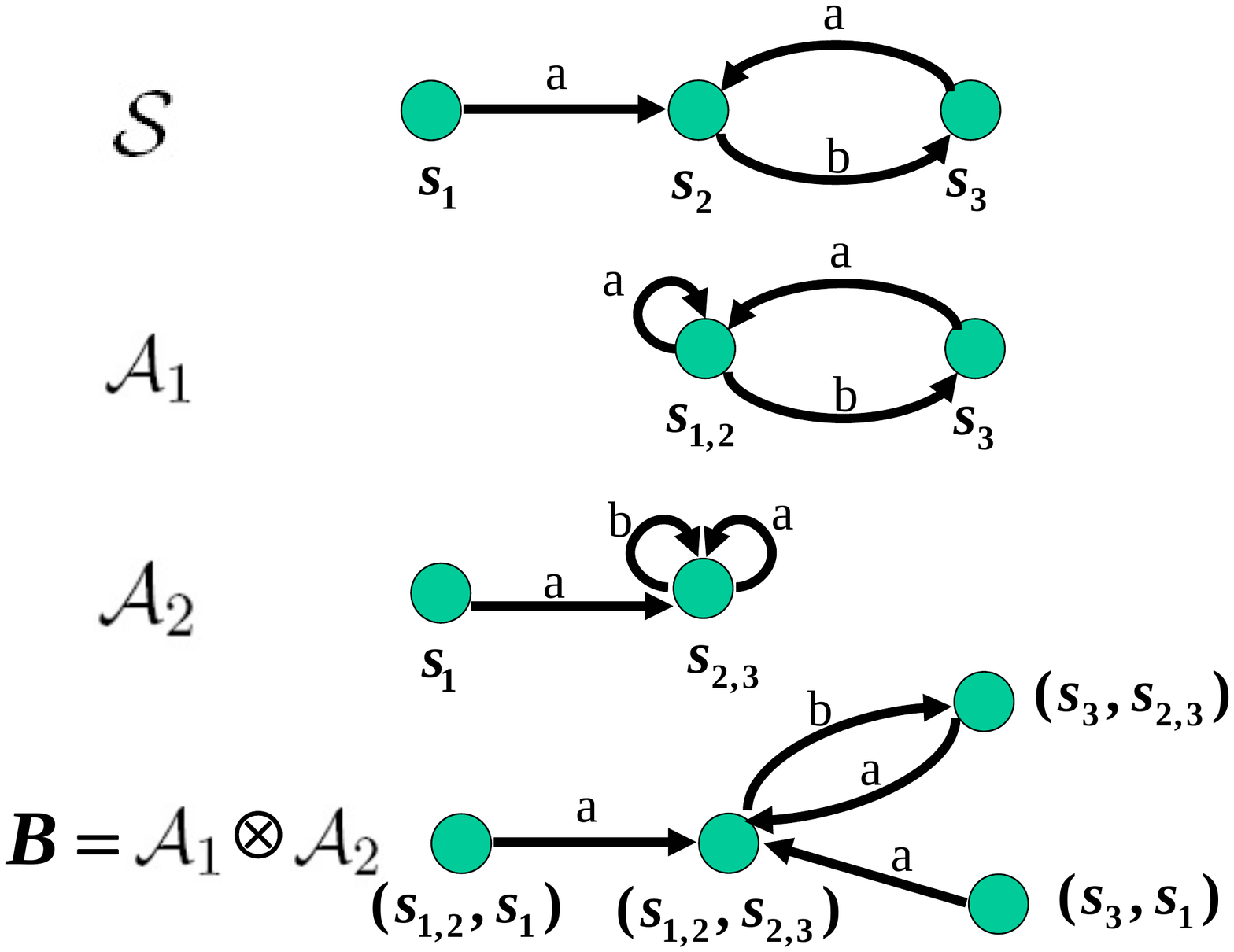} 
}
\caption{$\gspace$ is the original state space.
$\abspace{1}$ and $\abspace{2}$ are abstractions of $\gspace$.
$B=\abspace{1} \otimes \abspace{2}$
is the synchronized product of $\abspace{1}$ and $\abspace{2}$.
}
\label{fig-Helmert}
\end{figure}

Haslum et al.
\citeyear{AAAI2005Haslum} 
introduce a family of heuristics, called $h^m$ (for
any fixed $m \in \{1,2,...\}$),
that are based on abstraction, but are not covered by our definition
because the value of the heuristic for state $t$, $h^m(t)$,
is not defined as the distance from the abstraction of $t$ to the
abstract goal state.
Instead it takes advantage of a special monotonicity property of costs
in planning problems: the cost of achieving a subset of
the atoms defining the goal is a lower bound on the cost of achieving
the goal. When searching backwards from the goal to the start state,
as Haslum et al. do,
this allows an admissible heuristic to be defined in
the following recursive minimax fashion ($|t|$ denotes the number of atoms
in state $t$):


\[
h^m(t) =  \left\{ \begin{array}{ll}
	0, & t \subseteq start\\
	{\displaystyle \min_{(s,t)\in\Pi } } C(s,t) + h^m(s), & |t| \le m\\
	{\displaystyle \max_{ s\subset t,  |s| \le m} } h^m(s), & |t| > m
	\end{array}
	\right.
\]

\noindent
The first two lines of this definition are the standard method for
calculating the cost of a least-cost path. It is the third line that uses the
fact that the cost of achieving any subset of the atoms in $t$ is a lower bound
on the cost of achieving the entire set of atoms $t$.
The recursive calculation alternates between the min and max calculation
depending on the number of atoms in the state currently being
considered in the recursive calculation, and
is therefore different than a shortest path calculation or
taking the maximum of a set of shortest path calculations.


\subsubsection{Previous definitions of additive abstractions}

Prieditis \citeyear{machinediscovery} included a method (``Factor")
in his Absolver II system for creating additive abstractions, but
did not present any formal definitions or theory.

The first thorough discussion of additive abstractions is due to
Korf and Taylor~\citeyear{KorfTaylor1996}.
They observed that the
sliding tile puzzle's Manhattan Distance heuristic,
and several of its enhancements, were the sum of the distances
in a set of abstract spaces in which a small number of
tiles were ``distinguished".
As explained in Section \ref{additivePDB},
what allowed the abstract distances to be added and still be a lower bound
on distances in the original space is that
only the moves of the distinguished tiles counted towards the
abstract distance and no tile was distinguished in more than one abstraction.
This idea was later developed in a series of papers
\cite{disjointPDB,ADDPDB}, which extended its application
to other domains, such as the 4-peg Towers of Hanoi puzzle.

In the planning literature, the same idea was proposed by
Haslum et al.~\citeyear{AAAI2005Haslum}, who described it as
partitioning the operators into disjoint sets $B_1,...B_k$
and counting the cost of operators in set $B_i$ only in abstract space $A_i$.
The example they give is that in the Blocks World operators that
move block $i$ would all be in set $B_i$, effectively defining
a set of additive abstractions for the Blocks World exactly analogous
to the Korf and Taylor abstractions that define Manhattan Distance
for the sliding tile puzzle.

Edelkamp~\citeyear{planningPDB} took a different approach
to defining additive abstractions for STRIPS planning representations.
His method involves partitioning the atoms into disjoint sets $V_1,...V_k$
such that no operator changes atoms in more than one group.
If abstract space $A_i$ retains only the atoms in set $V_i$ then
the operators that do not affect atoms in $V_i$ will have no effect
at all in abstract space $A_i$ and will naturally have a cost of 0
in $A_i$.
Since no operator affects atoms in more than one group, no operator
has a non-zero cost in more than one abstract space and distances in
the abstract spaces can safely be added.
Haslum et al.~\citeyear{AAAI2007Haslum} extended this idea
to representations in which state variables could have multiple values.
In a subsequent paper Edelkamp \citeyear{EdelkampSymbolic}
remarks that if there
is no partitioning of atoms that induces a partitioning of the operators
as just described, additivity could be ``enforced" by
assigning an operator a cost of zero in all but one of the abstract
spaces---a return to the Korf and Taylor idea.

All the methods just described might be called ``all-or-nothing" methods
of defining abstract costs, because the cost of each edge $\gcost(\gpr)$
is fully assigned as the cost of
the corresponding abstract edge $\abcost{i}(\abpr{i})$
in one of the abstractions and the corresponding edges
in all the other abstractions are assigned a cost of zero.
Any such method obviously satisfies property (P3) and is therefore
additive.

Our theory of additivity does not require abstract methods to be defined
in an all-or-nothing manner, it allows  $\gcost(\gpr)$ to be divided
in any way whatsoever among the abstractions as long as property (P3)
is satisfied.
This possibility has been recognized in one recent publication
\cite{katz2007}, which did not report any experimental results.
This generalization is important because it eliminates the requirement
that operators must move only one ``tile" or change atoms/variables
in one ``group", and the related requirement that tiles/atoms
be distinguished/represented in exactly one of the abstract spaces.
This requirement restricted the application of previous methods
for defining additive abstractions,
precluding their application to state spaces such as
Rubik's Cube, the Pancake puzzle, and TopSpin.
As the following sections show, with our definition, additive
abstractions can be defined for any state space, including the three
just mentioned.

Finally, Helmert et al.~\citeyear{ICAPS2007Helmert}
showed that the synchronized product of additive abstractions
produces a heuristic $h_{sprod}$ that dominates $h_{add}$, in the
sense that $h_{sprod}(s) \ge h_{add}(s)$ for all states $s$.
This happens because the synchronized product forces the same
path to be used in all the abstract spaces, whereas the
calculation of each $\abminc{i}$ in $h_{add}$ can be based on
a different path.
The discussion of the negative results and infeasibility below
highlight the problems that can arise because each $\abminc{i}$
is calculated independently.


\section{New Applications of Additive Abstractions}
\label{applications}

This section and the next section report
the results of applying the general definition
of additive abstraction given in the previous section to three
benchmark state spaces: TopSpin, the Pancake puzzle and Rubik's Cube.
A few additional experimental results
may be found in the previous paper by \citeA{additiveTheory07}.
In all our
experiments all edges in the original state spaces have a cost of 1
and we define $R_i(\abpr{i}) = 1 - \abcost{i}(\abpr{i})$,
its maximum permitted value when edges cost 1.
We use pattern databases to store the heuristic values.
The pre-processing time
required to compute the pattern databases is excluded from the times
reported in the results, because the PDB needs to be calculated only once and this overhead is
amortized over the solving of many problem instances.

\subsection{Methods for Defining Costs}

We will investigate two general methods for defining
the primary cost of an abstract state transition $\abcost{i}(\abpr{i})$,
which we
call ``cost-splitting" and ``location-based" costs.
To illustrate the generality of these methods we will define them
for the two most common ways of representing states---as a vector
of state variables, which is the method we implemented in our experiments,
and as a set of logical atoms as in the STRIPS representation for
planning problems.

In a state variable representation
a state $t$ is represented by a vector of $m$ state variables,
each having
its own domain of possible values $D_j$, {\em i.e.},
$t=(t(0),..., t(m-1))$, where $t(j) \in D_j$ is the value assigned
to the $j^{th}$ state variable in state $t$.
For example,
in puzzles such as the Pancake puzzle and the sliding tile puzzles,
there is typically one variable for each physical location in the puzzle,
and the value of $t(j)$ indicates which ``tile"
is in location $j$ in state $t$. In this case the domain for all the
variables is the same.
State space abstractions are defined by abstracting the domains.
In particular,  in this setting
domain abstraction $\abmap{i}$ will leave specific domain values unchanged
(the ``distinguished" values according to $\abmap{i}$)
and map all the rest to the same special value, ``don't care".
The abstract state corresponding to $t$ according to $\abmap{i}$ is
$t_i$=$(t_i(0),...,t_i(m-1))$ with $t_i(j) = \abmap{i}(t(j))$.
As in previous research with these state spaces
a set of abstractions is defined by partitioning
the domain values into disjoint sets $E_1,...,E_k$ with $E_i$ being
the set of distinguished values in abstraction $i$.
Note that the theory developed in the previous section
does not require the distinguished values in different abstractions
to be mutually exclusive; it allows a value to be distinguished in any
number of abstract spaces provided abstract costs are defined appropriately.

As mentioned previously, in a STRIPS representation a state is represented
by the set of logical atoms that are true in the state.
A state variable representation can be converted to a STRIPS representation
in a variety of ways, the simplest being to define an atom for each
possible variable-value combination. If state variable $j$ has value $v$
in the state variable representation of state $t$ then
the atom
$variable$-$j$-$has$-$value$-$v$
is true in the STRIPS representation of $t$.
The exact equivalent of domain abstraction can be achieved by
defining $V_i$, the set of atoms to be used
in abstraction $i$,
to be all the atoms
$variable$-$j$-$has$-$value$-$v$
in which $v \in E_i$, the set
of distinguished values for domain abstraction $i$.

\subsubsection{Cost-splitting}

In a state variable representation, the
cost-splitting method of defining primary costs works as follows.
A state transition $\pi$ that changes $b^{\pi}$ state variables 
has its cost, $C(\pi)$, split among the corresponding abstract
state transitions
$\pi_1, \dots ,\pi_k$
in proportion to the number of distinguished values they assign
to the variables,
{\em i.e.,} in abstraction $i$

\[
\abcost{i}(\abpr{i})= \frac{b^{\pi}_i*C(\pi)}{b^{\pi}}
\]

\noindent
if $\pi$ changes $b^{\pi}$ variables and $b^{\pi}_i$ of them are
assigned
distinguished values by $\pi_i$.\footnote{Because $\abpr{i}$ might
correspond to several edges in the the original space, each with a different
cost or moving a different set of tiles, the technically correct
definition is:
\[
\abcost{i}(\abpr{i})= \min_{\pi , \abmap{i}(\pi)=\abpr{i}} \frac{b^{\pi}_i*C(\pi)}{b^{\pi}}
\]
\label{minfootnote} }
For example,
the $3 \times 3 \times 3$ Rubik's cube is composed of
twenty little moveable ``cubies" and
each operator moves eight
cubies, four corner cubies and four edge cubies.
Hence $b^{\pi}=8$ for all state transitions $\pi$.
If a particular state transition moves three cubies that are
distinguished according to abstraction $\psi_i$, the corresponding
abstract state transition, $\abpr{i}$, would cost $\frac{3}{8}$.
Strictly speaking, we require abstract edge costs to be integers, so the
fractional edge costs produced by cost-splitting must be scaled
appropriately to become integers. Our implementation of cost-splitting
actually does this scaling but it will simplify our presentation of
cost-splitting to talk of the edge costs as if they were fractional.

If each domain value is distinguished in at most one
abstraction ({\em e.g.}\ if the abstractions are defined by partitioning
the domain values) cost-splitting produces additive abstractions, {\em
i.e.},
$\gcost(\gpr) \ge \sum_{i=1}^k
\abcost{i}(\abpr{i})$ for all $\gpr \in \gpairs$.
Because $\gcost(\gpr)$ is known to be an integer, $\hadd$ can be defined
to be the ceiling of the sum of the abstract distances,
$\lceil \hspace{0.1cm} \sum_{i=1}^k \abcost{i}(\abpr{i}) \rceil$,
instead of just the sum.

With a STRIPS representation, cost-splitting could be defined identically,
with $b^{\pi}$ being the number of atoms changed (added or deleted)
by operator $\pi$
in the original space and $b^{\pi}_i$ being the number of atoms
changed by the corresponding operator in abstraction $i$.

\subsubsection{Location-based Costs}

In a location-based cost definition for a state variable representation,
a state variable
$loc_{\pi}$ is associated with state transition $\pi$ and $\pi$'s
full cost $C(\pi)$ is assigned to abstract state transition
$\abpr{i}$ if $\abpr{i}$ changes the value of variable $loc_{\pi}$
to a value that is distinguished according to $\abmap{i}$.\footnote{As in
footnote \ref{minfootnote}, the technically correct definition has
$\min_{\pi, \abmap{i}(\pi)=\abpr{i}} C(\pi)$
instead of $C(\pi)$.}
Formally:

\[
\abcost{i}(\abpr{i})= \left\{
  \begin{array}{ll}
    C(\pi), & \mbox{if $\pi_i = (t^1_i,t^2_i)$, $t^1_i(loc^{\pi}) \ne t^2_i(loc^{\pi})$, and}\\
                            & \mbox{$t^2_i(loc^{\pi})$ is a distinguished value according to $\abmap{i}$.}
\\
    0, & \mbox{otherwise.}
  \end{array} \right.
\]

Instead of focusing on the value that is assigned to variable $loc_{\pi}$,
location-based costs
can be defined equally well on the value that variable $loc_{\pi}$
had before it was changed.
In either case, if each domain value is distinguished in at
most one abstraction location-based costs produce additive
abstractions.
The name ``location-based" is based on the typical representations
used for puzzles, in which there is a state variable for each
physical location in the puzzle.
For example, in Rubik's Cube one could choose the reference variables
to be the ones representing the two diagonally opposite corner locations
in the puzzle.
Note that each possible Rubik's cube operator changes
exactly one of these locations.
An abstract state transition would have a primary cost of 1 if the cubie
it moved into one of these locations was a distinguished cubie in its
abstraction, and a primary cost of 0 otherwise.

For a STRIPS representation of states, location-based costs can be
defined by choosing an atom $a$ in the $Add$ list for each operator $\pi$
and assigning the full cost $C(\pi)$ to abstraction $i$ if $a$ appears
in the $Add$ list of $\pi_i$. If atoms are partitioned so that each atom
appears in at most one abstraction, this method will define additive costs.

Although the cost-splitting and location-based methods for defining
costs can be applied to a wide range of state spaces, they are not
guaranteed to define heuristics that are superior to other
heuristics for a given state space. We determined experimentally
that heuristics based on cost-splitting substantially improve
performance for sufficiently large versions of TopSpin and that
heuristics based on location-based costs vastly improve the state of
the art for the 17-Pancake puzzle. In our experiments additive
heuristics did not improve the state of the art for Rubik's Cube.
The following subsections describe the positive results in detail.
The negative results are discussed in Section \ref{negative}.

\subsection{TopSpin with Cost-Splitting}
\label{resultsTopSpin}

In the $(N,K)$-TopSpin puzzle (see Figure \ref{TOPSPIN}) there are
$N$ tiles (numbered $1, \dots ,N$) arranged on a circular
track, and two physical movements are possible: (1) the entire set
of tiles may be rotated around the track, and (2) a segment
consisting of $K$ adjacent tiles in the track may be reversed. As in
previous formulations
of this puzzle as a state space \cite{duallookups,HierarchialRevisited,multiplePDB},
we do not represent the first physical
movement as an operator, but instead
designate one of the tiles (tile 1) as a reference tile with the
goal being to get the other tiles in increasing order starting
from this tile (regardless of its position).
The state space therefore has $N$ operators (numbered $1, \dots ,N$),
with operator $a$ reversing the segment of length $K$ starting
at position $a$ relative to the current position of tile 1.
For certain combinations of $N$ and $K$ all possible permutations
can be generated from the standard goal state by these operators, but in general the space consists of connected components  and so not all states are reachable \cite{chen96}. In the experiments in this
section, $K=4$ and $N$ is varied.

\begin{figure}[htb]
\centerline{
\includegraphics[height=1.6in]{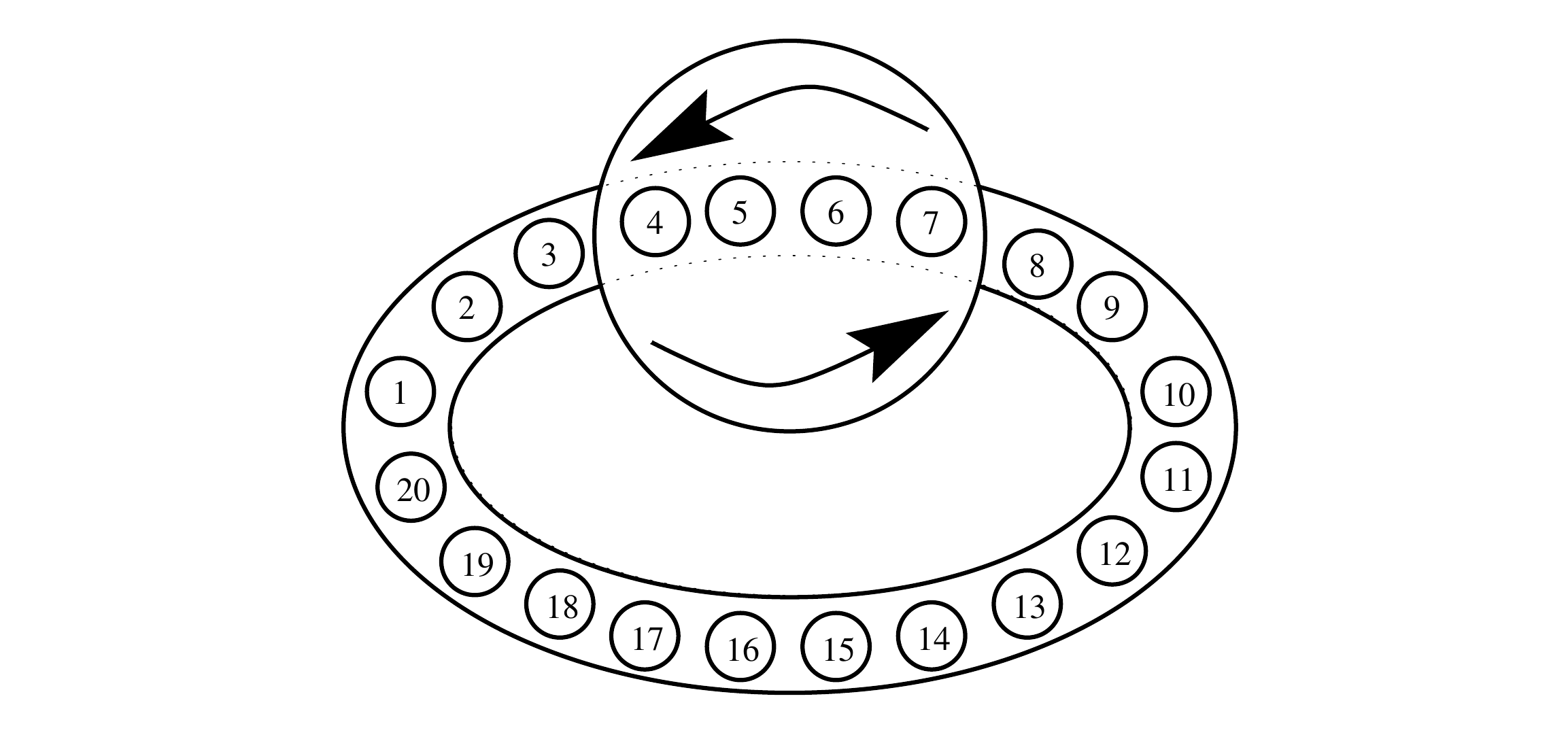}
}
\caption{The TopSpin puzzle.} \label{TOPSPIN}
\end{figure}

The sets of abstractions used in these experiments are described
using a tuple written as $a_1$--$a_2$--$\dots$--$a_M$,  indicating
that the set contains $M$ abstractions,
with tiles $1 \dots (a_1)$ distinguished in the first abstraction,
tiles $(a_1+1) \dots (a_1+a_2)$
distinguished in the second abstraction, and so on.
For example, 6-6-6 denotes a set of three abstractions in
which the distinguished tiles are ($1 \dots 6$), ($7 \dots 12$), and
($13 \dots 18$) respectively.

The experiments compare $\hadd$, the additive use of a set of
abstractions, with $\hmax$, the standard use of the same
abstractions, in which, as described in Section
\ref{heuristicAbstractions}, the full cost of each state transition
is counted in each abstraction and the heuristic returns the maximum
distance to goal returned by the different abstractions.
Cost-splitting is used to define operator costs in the abstract
spaces for $\hadd$. Because $K=4$, each operator moves 4 tiles. If
$b_i$ of these are distinguished tiles when operator $op$ is applied
to state $s_i$ in abstraction $i$, applying $op$ to $s_i$ has a
primary cost of $\frac{b_i}{4}$ in abstraction $i$.

In these experiments the heuristic defined by each
abstraction is stored in a pattern database (PDB).
Each abstraction would normally be used to define its own PDB,
so that a set of $M$ abstractions would require $M$ PDBs.
However, for TopSpin, if two (or more) abstractions have the same number
of distinguished tiles and the distinguished tiles are all adjacent,
one PDB can be used
for all of them by suitably renaming the tiles before doing the PDB lookup.
For the 6-6-6 abstractions, for example, only one PDB is needed, but
three lookups would be done in it, one for each abstraction.
Because the position of tile 1 is effectively fixed, this PDB is $N$ times
smaller than it would normally be.  For example, with $N=18$, the
PDB for the 6-6-6 abstractions contains
$17 \times 16 \times \ldots \times 13$ entries.
The memory needed for each entry in the $\hadd$ PDBs is twice the memory
needed for an entry in the $\hmax$ PDBs because of the need to represent
fractional values.

We ran experiments for the values of $N$ and sets of abstractions shown in
the first two columns of Table \ref{ALLRES}.
Start states were generated by a random walk of 150 moves
from the goal state.  There were 1000, 50 and 20 start states for
$N= 12, 16$ and $18$, respectively.  The average solution length
for these start states is shown in the third column of Table \ref{ALLRES}.
The average number of nodes generated and the average CPU time (in seconds)
for $\textit{IDA}^*$ to solve the given start states is shown in the {\bf Nodes}
and {\bf Time} columns for each of $\hmax$ and $\hadd$.
The {\bf Nodes Ratio} column gives the ratio of {\bf Nodes} using $\hadd$ to
{\bf Nodes} using $\hmax$.  A ratio less than one
(highlighted in bold) indicates
that $\hadd$, the heuristic based on additive abstractions with
cost-splitting,
is superior to $\hmax$, the standard heuristic using the same set of
abstractions.

\begin{table}[htb]
\begin{center}
\begin{tabular}{|l|r|r|r|r|r|r|r|}\hline
    &             & Average &  \multicolumn{2}{|c|}{ }          & \multicolumn{2}{|c|}{$\hadd$ based on}&    \\
$N$ & Abs & Solution & \multicolumn{2}{|c|}{\raisebox{1.5ex}[0pt]{$\hmax$}} & \multicolumn{2}{|c|}{cost-splitting}& Nodes \\
\cline{4-7}
& & Length &   Nodes &  Time    & Nodes   & Time & Ratio \\
\hline \hline
12 & 6-6  &  9.138        & 14,821    & 0.05 & 53,460  & 0.16    & 3.60   \\
12 & 4-4-4 & 9.138      & 269,974   & 1.10 & 346,446   & 1.33  &1.28    \\
12& 3-3-3-3 &  9.138     & 1,762,262   & 8.16 & 1,388,183  & 6.44  &{\bf 0.78} \\
\hline\hline
16 & 8-8 &   14.040 & 1,361,042     & 3.42 & 2,137,740    &  4.74  & 1.57        \\
16 &4-4-4-4 & 14.040    & 4,494,414,929     & 13,575.00 &  251,946,069   &  851.00  &  {\bf 0.056}  \\
\hline \hline
18 & 9-9   & 17.000 & 38,646,344 & 165.42& 21,285,298  &  91.76    & {\bf 0.55}     \\
18 & 6-6-6 & 17.000   & 18,438,031,512  &   108,155.00  & 879,249,695    & 4,713.00  & {\bf 0.04}\\

\hline
\end{tabular}
\end{center}
\caption{($N,4$)-TopSpin results using cost-splitting.} \label{ALLRES}
\end{table}

When $N=12$ and $N=16$ the best performance is achieved by $\hmax$
based on a pair of abstractions each having $\frac{N}{2}$
distinguished tiles.  As $N$ increases the advantage of $\hmax$
decreases and, when $N=18$, $\hadd$ outperforms $\hmax$ for all
abstractions used. Moreover, even for the smaller values of $N$
$\hadd$ outperforms $\hmax$ when a set of four abstractions with
$\frac{N}{4}$ distinguished tiles each is used. This is important
because as $N$ increases, memory limitations will preclude using
abstractions with $\frac{N}{2}$ distinguished tiles and the only option
will be to use more abstractions with fewer distinguished tiles
each.  The results in Table \ref{ALLRES} show that $\hadd$ will be
the method of choice in this situation.


\subsection{The Pancake Puzzle with Location-based Costs}
\label{fixreference}

In this section, we present the experimental results on the
17-Pancake puzzle using location-based costs. The same notation as
in the previous section is used to denote sets of abstractions,
{\em e.g.}\ 5-6-6 denotes a set of three abstractions, with the first
having tiles ($0 \dots 4$) as its distinguished tiles, the second
having tiles ($5 \dots 10$) as its distinguished tiles, and the
third having tiles ($11 \dots 16$) as its distinguished tiles. Also
as before, the heuristic for each abstraction is precomputed and
stored in a pattern database (PDB).  Unlike TopSpin, there are no
symmetries in the Pancake puzzle that enable different abstractions
to make use of the same PDB, so a set of $M$ abstractions for the
Pancake puzzle requires $M$ different PDBs.

Additive abstractions are defined using the location-based method with
just one reference location, the leftmost position.
This position was chosen because the tile in this position changes whenever
any operator is applied to any state in the original state space.
This means that every edge cost in the original space will be
fully counted in some abstract space as long as each tile is
a distinguished tile in some abstraction.
As before, we use $\hadd$ to denote the heuristic defined by adding
the values returned by the individual additive abstractions.

Our first experiment compares $\textit{IDA}^*$ using $\hadd$ with the best
results known for the 17-Pancake puzzle \cite{dualsearch} (shown in Table
\ref{tbl-17pancakeresults-1}), which were obtained using a single
abstraction having the rightmost seven tiles (10--16) as its distinguished
tiles and an advanced search technique called Dual $\textit{IDA}^*$
($\textit{DIDA}^*$).\footnote{In particular, $\textit{DIDA}^*$ with the
``{\em jump if larger}" (JIL) policy and the bidirectional pathmax method
(BPMX) to propagate the inconsistent heuristic values that arise during
dual search. \citeA{dualsearch} provided more details. BPMX was first introduced
by Felner et al. \citeyear{duallookups}.} $\textit{DIDA}^*$ is an extension of
$\textit{IDA}^*$ that exploits the fact that, when states are permutations
of tiles as in the Pancake puzzle, each state $s$ has an easily computable
``dual state" $s^d$ with the special property that inverses of paths from
$s$ to the goal are paths from $s^d$ to the goal.  If paths and their
inverses cost the same, $\textit{DIDA}^*$ defines the heuristic value for
state $s$ as the maximum of $h(s)$ and $h(s^d)$, and sometimes will decide
to search for a least-cost path from $s^d$ to goal when it is looking for
a path from $s$ to goal.

The results of this experiment are shown in the top three rows of
Table \ref{tbl-17pancakeresults}. The {\bf Algorithm} column
indicates the heuristic search algorithm. The {\bf Abs} column shows
the set of abstractions used to generate heuristics.
The {\bf Nodes} column shows the average number of nodes generated in solving
1000 randomly generated start states.  These start states have an
average solution length of 15.77.
The {\bf Time} column gives the average number of CPU seconds
needed to solve these start states
on an AMD Athlon(tm) 64 Processor 3700+ with 2.4 GHz clock rate and 1GB memory.
The {\bf Memory} column indicates the total size of each set of PDBs.

\begin{table}[htb]
\begin{center}
\begin{tabular}{|l|r|c|r|r|r|r|}\hline
&   &   & Average &   \multicolumn{3}{|c|}{$h$ based on} \\
$N$ &Algorithm & Abs & Solution & \multicolumn{3}{|c|}{a single large PDB} \\
\cline{5-7}
& & & Length &   Nodes &  Time    & Memory    \\
\hline \hline
17 & $\textit{DIDA}^*$&rightmost-7 &  15.77  & 124,198,462 &37.713&98,017,920   \\
\hline
\end{tabular}
\end{center}
\caption{The best results known for the 17-Pancake puzzle \cite{dualsearch}, which were obtained using a single abstraction having the rightmost seven tiles ($10-16$) as its distinguished tiles and an advanced search technique called Dual $\textit{IDA}^*$ ($\textit{DIDA}^*$).} \label{tbl-17pancakeresults-1}
\end{table}

\begin{table}[htb]
\begin{center}
\begin{tabular}{|l|r|c|r|r|r|r|}\hline
&   &   & Average &   \multicolumn{3}{|c|}{$\hadd$ based on} \\
$N$ &Algorithm & Abs & Solution & \multicolumn{3}{|c|}{Location-based Costs} \\
\cline{5-7}
& & & Length &   Nodes &  Time    & Memory    \\
\hline \hline
17 & $\textit{IDA}^*$&4-4-4-5 &  15.77  & 14,610,039  & 4.302 & 913,920   \\
17 & $\textit{IDA}^*$&5-6-6 &  15.77  & 1,064,108  & 0.342&18,564,000  \\
17 &$\textit{IDA}^*$&3-7-7 &  15.77  & 1,061,383 & 0.383&196,039,920 \\
\hline
17 & $\textit{DIDA}^*$&4-4-4-5 &  15.77  & 368,925&0.195 &  913,920   \\
17 & $\textit{DIDA}^*$&5-6-6 &  15.77  & 44,618 &0.028 &  18,564,000  \\
17 & $\textit{DIDA}^*$&3-7-7 &  15.77  & 37,155 &0.026 & 196,039,920  \\
\hline
\end{tabular}
\end{center}
\caption{17-Pancake puzzle results using $\hadd$ based on location-based costs. } \label{tbl-17pancakeresults}
\end{table}


Clearly, the use of $\hadd$ based on location-based costs results in a very significant
reduction in nodes generated compared to using a single large PDB,
even when the latter has the advantage of being used by a more
sophisticated search algorithm. Note that the total memory needed
for the 4-4-4-5 PDBs is only one percent of the memory needed for
the rightmost-7 PDB, and yet $\textit{IDA}^*$ with 4-4-4-5 generates 8.5 times
fewer nodes than $\textit{DIDA}^*$with the rightmost-7 PDB. Getting excellent
search performance from a very small PDB is especially important in
situations where the cost of computing the PDBs must be taken into
account in addition to the cost of problem-solving
\cite{HierarchialRevisited}.

The memory requirements increase significantly when abstractions
contain more distinguished tiles, but in this experiment the
improvement of the running time does not increase accordingly. For
example, the 3-7-7 PDBs use ten times more memory than the 5-6-6
PDBs, but the running time is almost the same. This is because the
5-6-6 PDBs are so accurate there is little room to improve them. The
average heuristic value on the start states using the 5-6-6 PDBs is
13.594, only 2.2 less than the actual average solution length. The
average heuristic value using the 3-7-7 PDBs is only slightly higher
(13.628).

The last three rows in Table \ref{tbl-17pancakeresults}
show the results when $\hadd$ with location-based costs is used in
conjunction with $\textit{DIDA}^*$.
These results
show that combining our additive abstractions with state-of-the-art search
techniques results in further significant reductions in nodes generated
and CPU time.
For example, the 5-6-6 PDBs
use only 1/5 of the memory of the rightmost-7 PDB
but reduce the number of nodes generated by $\textit{DIDA}^*$ by a factor of 2783
and the CPU time by a factor of 1347.

To compare $\hadd$ to $\hmax$ we ran plain $\textit{IDA}^*$ with $\hmax$
on the same 1000 start states, with a time limit for each start state
ten times greater than the time needed to solve the start state using
$\hadd$.
With this time limit only 63 of the 1000 start states could be solved
with $\hmax$ using the 3-7-7 abstraction, only 5 could be solved
with $\hmax$ using the 5-6-6 abstraction, and only 3 could be solved
with $\hmax$ using the 4-4-4-5 abstraction.
To determine if $\hadd$'s superiority over $\hmax$ for location-based costs
on this puzzle could have been predicted using Lemma \ref{LEM.ADDBETTER},
we generated 100 million random
17-Pancake puzzle states and tested how many satisfied the requirements
of Lemma \ref{LEM.ADDBETTER}.
Over 98\% of the states satisfied those requirements
for the 3-7-7 abstraction, and over 99.8\% of
the states satisfied its requirements for the 5-6-6 and 4-4-4-5 abstractions.

\section{Negative Results}
\label{negative}
Not all of our experiments yielded positive results.  Here we explore some trials where our additive approaches did not perform as well.  By examining some of these cases closely, we shed light on the conditions which might indicate when these approaches will be useful.

\subsection{TopSpin with Location-Based Costs}
\label{TopSpin with Location-Based}
In this experiment, we used the 6-6-6 abstraction of
$(18,4)$-TopSpin as in Section \ref{resultsTopSpin} but with
location-based costs instead of cost-splitting. The primary cost of
operator $a$, the operator that reverses the segment consisting of
locations $a$ to $a+3$ ($modulo$ $18$), is $1$ in abstract space $i$
if the tile in location $a$ before the operator is applied is
distinguished according to abstraction $\psi_i$ and $0$ otherwise.

This definition of costs was disastrous, resulting in
$\abminc{i}(\abst{i},\abgoal{i})=0$ for all abstract states in all
abstractions. In other words, in finding a least-cost path
it was never necessary to use operator
$a$ when there was a distinguished tile in location $a$. It was
always possible to move towards the goal by applying another
operator, $a'$, with a primary cost of $0$.
To
illustrate how this is possible, consider state
\begin{tabular}{|c|c|c|c|c|c|c|}
\hline           0&4&5&6&3&2&1\\
\hline
\end{tabular}
of $(7,4)$-$TopSpin$.
This state can be transformed into the goal in a single move: the
operator that reverses the four tiles starting with tile $3$ produces the state
\begin{tabular}{|c|c|c|c|c|c|c|}
\hline           3&4&5&6&0&1&2\\
\hline
\end{tabular}
which is equal to the goal state when it is cyclically shifted to
put $0$ into the leftmost position. With the 4-3 abstraction this
move has a primary cost of $0$ in the abstract space based on tiles
$4...6$, but it would have a primary cost of $1$ in the abstract
space based on tiles $0...3$ (because tile $3$ is in the leftmost
location changed by the operator). However the following sequence
maps tiles $0...3$ to their goal locations and has a primary cost of
$0$ in this abstract space (because a ``don't care'' tile is always moved
from the reference location):
\begin{center}
\begin{tabular}{|c|c|c|c|c|c|c|c|}
\hline           0&*&*&*&3&2&1\\
\hline
\end{tabular}

\begin{tabular}{|c|c|c|c|c|c|c|c|}
\hline           0&*&*&1&2&3&*\\
\hline
\end{tabular}

\begin{tabular}{|c|c|c|c|c|c|c|c|}
\hline           0&*&3&2&1&*&*\\
\hline
\end{tabular}

\begin{tabular}{|c|c|c|c|c|c|c|c|}
\hline           0&1&2&3&*&*&*\\
\hline
\end{tabular}

\end{center}

\subsection{Rubik's Cube}

The success of cost-splitting on (18,4)-TopSpin suggested it might also
provide an improved heuristic for Rubik's Cube, which can be viewed
as a 3-dimensional version of (20,8)-TopSpin.
We used the standard method of partitioning the cubies to create
three abstractions, one based on the 8 corner cubies, and the others
based on 6 edge cubies each.
The standard heuristic based on this partitioning, $\hmax$, expanded
approximately three times fewer nodes than $\hadd$ based on this
partitioning and primary costs defined by cost-splitting.
The result was similar whether the 24 symmetries of Rubik's Cube
were used to define multiple heuristic lookups or not.

We believe the reason for cost-splitting working well for (18,4)-TopSpin but
not Rubik's Cube is that an operator in Rubik's Cube moves more cubies
than the number of tiles moved by an operator in (18,4)-TopSpin.
To test if operators
moving more tiles reduces the effectiveness of cost-splitting
we solved 1000 instances of (12,$K$)-TopSpin for various values of $K$, all
using the 3-3-3-3 abstraction.
The results are shown in Table \ref{topspin12}.
The {\bf Nodes Ratio} column
gives the ratio of {\bf Nodes} using $\hadd$ to {\bf Nodes} using $\hmax$.
A ratio less than one (highlighted in bold) indicates that
$\hadd$ is superior to $\hmax$.
The results  clearly show
that $\hadd$ based on cost-splitting is superior to $\hmax$ for small
$K$ and steadily
loses its advantage as $K$ increases.
The same phenomenon can also be seen in Table \ref{ALLRES}, where increasing
$N$ relative to $K$ increases the effectiveness
of additive heuristics based on cost-splitting.

\begin{table}[htb]
\begin{center}
\begin{tabular}{|r|r|r|r|r|r|}\hline
 &  \multicolumn{2}{|c|}{ }          & \multicolumn{2}{|c|}{$\hadd$ based on}&  \\

 $K$ & \multicolumn{2}{|c|}{\raisebox{1.5ex}[0pt]{$\hmax$}} & \multicolumn{2}{|c|}{cost-splitting}& Nodes \\
\cline{2-5}
 & Nodes &  Time    & Nodes   & Time & Ratio \\
\hline \hline
3 & 486,515    &  2.206 & 207,479    & 0.952        & {\bf 0.42}   \\
4 & 1,762,262   & 8.164 & 1,388,183  & 6.437        & {\bf 0.78}    \\
5 & 8,978   &     0.043 & 20,096     & 0.095        &  2.23 \\
6 & 193,335,181 & 901.000 & 2,459,204,715 & 11,457.000  & 12.72  \\
\hline
\end{tabular}
\end{center}
\caption{($12,K$)-TopSpin results using cost-splitting.} \label{topspin12}
\end{table}

We also investigated location-based costs for Rubik's Cube.
The cubies were partitioned into four groups, each containing
three edge cubies and two corner cubies, and an abstraction was defined
using each group.
Two diagonally opposite corner positions were used as the reference
locations (as noted above, each Rubik's Cube operator changes exactly
one of these locations).
The resulting $\hadd$ heuristic was so weak we could not solve random
instances of the puzzle with it.

\subsection{The Pancake Puzzle with Cost-Splitting}
\label{pancake-costsplitting}

Table \ref{tbl-13pancakeresults} compares
$\hadd$ and $\hmax$ on the 13-Pancake puzzle
when costs are defined using cost-splitting.
The memory is greater for $\hadd$ than $\hmax$ because
the fractional entries that cost-splitting produces require
more bits per entry than the small integer values stored in
the $\hmax$ PDB.
In terms of both run-time and number of nodes generated,
$\hadd$ is inferior to $\hmax$ for these costs,
the opposite of what was seen in Section \ref{fixreference}
using location-based costs.



\begin{table}[htb]
\begin{center}
\begin{tabular}{|l|r|r|r|r|r|r|}\hline
    &             & Average &  \multicolumn{2}{|c|}{ }          & \multicolumn{2}{|c|}{$\hadd$ based on}  \\
$N$ & Abs & Solution & \multicolumn{2}{|c|}{\raisebox{1.5ex}[0pt]{$\hmax$}} & \multicolumn{2}{|c|}{costing-splitting} \\
\cline{4-7}
& & Length &   Nodes &  Time    & Nodes   & Time   \\
\hline \hline
13 & 6-7  &  11.791       & 166,479    & 0.0466 & 1,218,903  & 0.3622        \\
\hline
\end{tabular}
\end{center}
\caption{$\hadd$ vs. $\hmax$ on the 13-Pancake puzzle.} \label{tbl-13pancakeresults}
\end{table}

Cost-splitting, as we have defined it for the Pancake puzzle,
adversely affects $\hadd$ because it enables
each individual abstraction to get artificially low estimates
of the cost of solving its distinguished tiles by increasing the number of
``don't care" tiles that are moved.
For example, with cost-splitting
the least-cost sequence of operators to get tile ``0"
into its goal position from abstract state
\begin{tabular}{|c|c|c|c|c|}
\hline           *&0&*&*&*\\
\hline
\end{tabular}
is not the obvious single move of reversing the first two positions.
That move costs $\frac{1}{2}$, whereas the
2-move sequence that reverses the entire state
and then reverses the first four positions costs only $\frac{1}{5}+\frac{1}{4}$.

\vspace{0.05in}
As a specific example, consider state
\begin{tabular}{|c|c|c|c|c|c|c|c|c|c|c|c|}
\hline           7&4&5&6&3&8&0&10&9&2&1&11\\
\hline
\end{tabular}
of the 12-Pancake puzzle. Using the 6-6 abstractions,
the minimum number of moves
to get tiles 0--5 into their goal
positions is 8,
and for 6--11 it is 7, where in each case we ignore the final locations of the other tiles.
Thus, $\hmax$ is 8.
By contrast, $\hadd$ is $6.918$, which is less than even the smaller
of the two numbers used to define $\hmax$.
The two move sequences whose costs are added to compute
$\hadd$ for this state each have slightly
more moves than the corresponding sequences
on which $\hmax$ is based (10 and 9 compared to 8 and 7),
but involve more than twice
as many ``don't care" tiles (45 and 44 compared to 11 and 17) and
so are less costly.

There is hope that this pathological situation can be detected, at least
sometimes, by inspecting the residual costs.
If the residual costs are defined to be complementary to the primary costs
({\em i.e.}\ $R_i(\pi_i)=C(\pi)-C_i(\pi_i)$), as we have done, then decreasing
the primary cost increases the residual cost. If the residual cost
is sufficiently
large in one of the abstract spaces the conditions of Lemma \ref{LEM.INFEASIBLE}
will be satisfied, signalling that the value returned by $\hadd$ is
provably too low.
This is the subject of the next section, on ``infeasibility".

\section{Infeasible Heuristic Values}
\label{infeasible}

This section describes a way to increase the heuristic values
defined by additive abstractions in some circumstances.
The key to the approach is to identify ``infeasible" values---ones
that cannot possibly be the optimal solution cost.
Once identified the infeasible values can be increased to give a better estimate
of the solution cost.
An example of infeasibility occurs with the Manhattan Distance ($\textit{MD}$)
heuristic for the sliding tile puzzle.
It is well-known that the parity of $\textit{MD}(t)$ is the same as the parity
of the optimal solution cost for state $t$.
If some other heuristic for the sliding tile puzzle returns a value
of the opposite parity, it can safely be increased until it has the correct
parity.
This example relies on specific properties of the $\textit{MD}$ heuristic and the
puzzle.
Lemma \ref{LEM.INFEASIBLE} gives a problem-independent method for testing
infeasibility, and that is what we will use.

To illustrate how infeasibility can be detected using Lemma
\ref{LEM.INFEASIBLE} consider the example in Figure
\ref{fig-infeasible}. The solution to the abstract problem shown in
the middle part of the figure requires 9 distinguished moves, so
$C^*_1(A_1)=9$. The abstract paths that solve the problem with 9
distinguished moves require, at a minimum, 9 ``don't care" moves, so
$R^*_1(A_1)=9$. A similar calculation for the abstract space on the
right of the figure yields $C^*_2(A_2)=5$ and $R^*_2(A_2)=7$. The
value of $\hadda{A}{\goal}$ is therefore $C^*_1(A_1)+C^*_2(A_2)=9+5=14$.
This value is based on the assumption that there is a path in the
original space that makes $C^*_1(A_1)=9$ moves of tiles 1, 3, 5, and
7, and $C^*_2(A_2)=5$ moves of the other tiles. However, the value
of $R^*_1(A_1)$ tells us that any path that uses only $9$ moves of
tiles 1, 3, 5, and 7 to put them into their goal locations must make
at least $9$ moves of the other tiles, it cannot possibly make just
$5$ moves. Therefore there does not exist a solution costing as
little as $C^*_1(A_1)+C^*_2(A_2)=14$.

\label{infeasibleresult}

To illustrate the potential of this method for improving additive
heuristics, Table \ref{fig-15slidingpuzzle} shows the average
results of $\textit{IDA}^*$ solving 1000 test instances of the 15-puzzle using
two different tile partitionings (shown in Figure
\ref{fig-partition}) and costs defined by the method described in
Section \ref{additivePDB}. These additive heuristics have the same
parity property as Manhattan Distance, so when infeasibility is
detected $2$ can be added to the value. The {\bf $\hadd$} columns
show the average heuristic value of the 1000 start states. As can
be seen infeasibility checking increases the initial heuristic value
by over 0.5 and reduces the number of nodes generated and the CPU
time by over a factor of $2$. However, there is a space penalty for
this improvement, because the $R^*$ values must be stored in the
pattern database in addition to the normal $C^*$ values. This
doubles the amount of memory required, and it is not clear if
storing $R^*$ is the best way to use this extra memory. This
experiment merely shows that infeasibility checking is one way to
use extra memory to speed up search for some problems.

\begin{figure}[!ht]
\centerline{
\includegraphics[width=3.50cm]{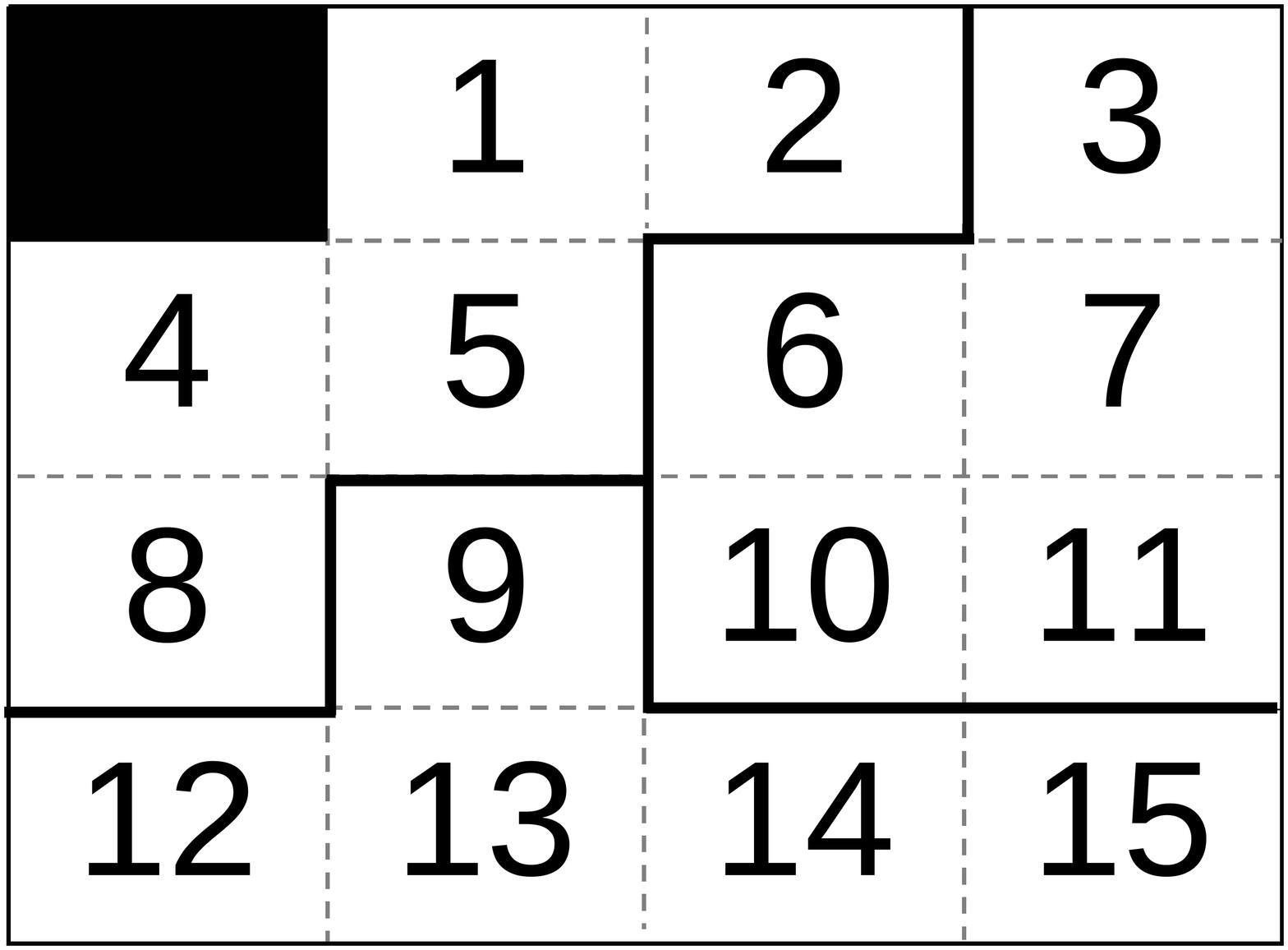}
\hspace{1cm}
\includegraphics[width=3.50cm]{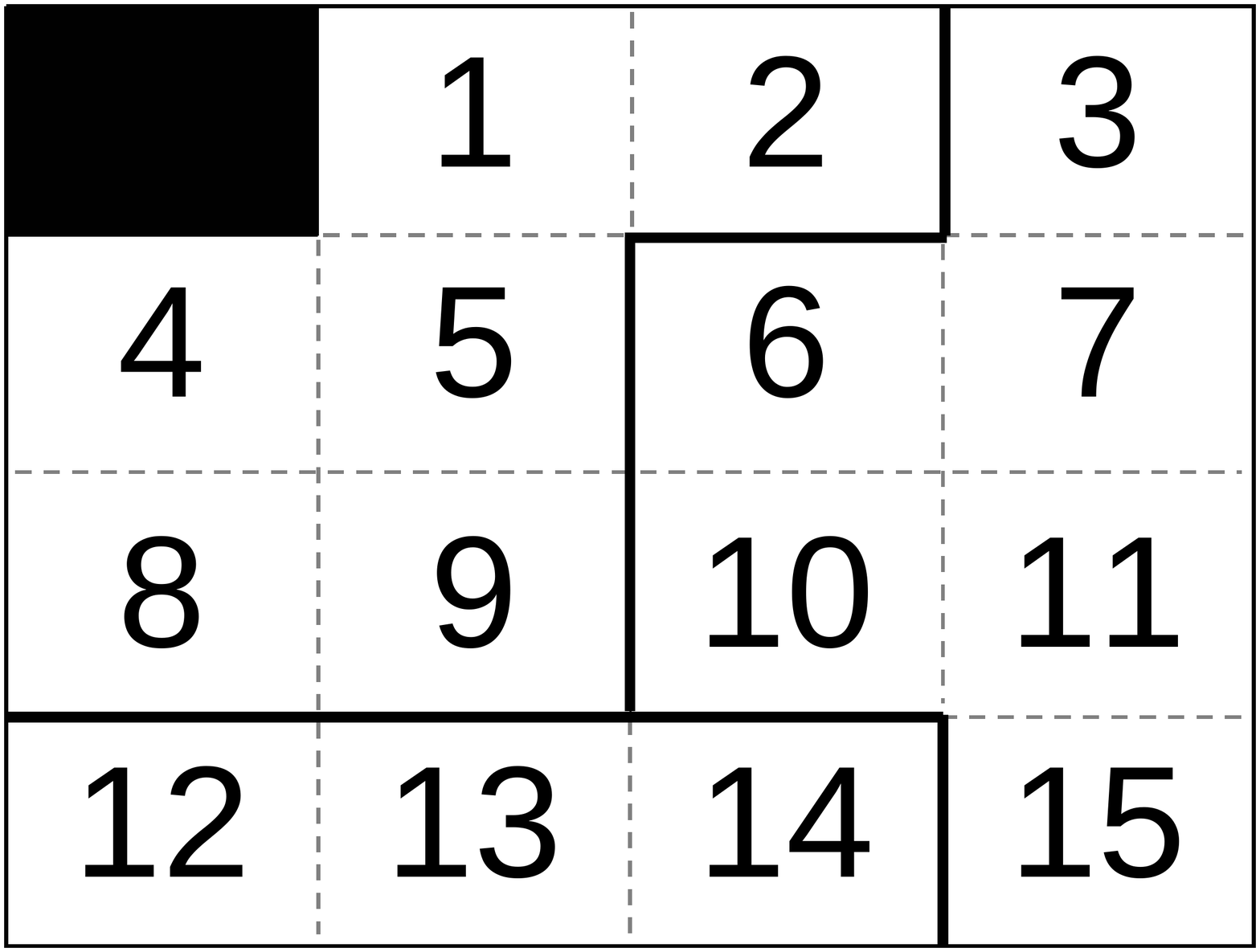}
}
\caption{Different tile partitionings for the 15-puzzle (left: 5-5-5; right: 6-6-3).}
\label{fig-partition}
\end{figure}

\begin{table}[htb]
\begin{center}
\begin{tabular}{|l|r|r|r|r|r|r|r|r|}\hline
    &             & Average  & \multicolumn{6}{|c|}{$\hadd$ based on zero-one cost-splitting}  \\
\cline {4-9}
$N$ & Abs & Solution & \multicolumn{3}{|c|}{No Infeasibility Check}& \multicolumn{3}{|c|}{Infeasibility Check} \\
\cline{4-9}
& & Length & $\hadd$ &  Nodes &  Time & $\hadd$   & Nodes   & Time  \\
\hline \hline
15 & 5-5-5  & 52.522     &41.56  &3,186,654&0.642  &42.10   &1,453,358&0.312     \\
15 & 6-6-3  &  52.522        &42.13 &1,858,899&0.379 &42.78   &784,145&0.171    \\
\hline
\end{tabular}
\end{center}
\caption{The effect of infeasibility checking on the 15-puzzle.} \label{fig-15slidingpuzzle}
\end{table}


Slightly stronger results were obtained for the $(N,4)$-TopSpin puzzle
with costs defined by
cost-splitting, as described in Section \ref{resultsTopSpin}.
The ``No Infeasibility Check" columns
in Table \ref{tbl-TopSpinCheck} are the same as the ``$\hadd$ based on
cost-splitting" columns of the corresponding rows in Table \ref{ALLRES}.
Comparing these to the ``Infeasibility Check" columns shows that in most
cases infeasibility checking reduces the number of nodes generated and
the CPU time by roughly a factor of 2.

When location-based costs are used with TopSpin infeasibility
checking adds one to the heuristic value of almost every state.
However, this simply means that most states have a heuristic value of 1
instead of 0
(recall the discussion in Section \ref{TopSpin with Location-Based}),
which is still a very poor heuristic.

\begin{table}[htb]
\begin{center}
\begin{tabular}{|l|r|r|r|r|r|r|}\hline
    &             & Average  & \multicolumn{4}{|c|}{$\hadd$ based on costing-splitting}  \\
\cline {4-7}
$N$ & Abs & Solution & \multicolumn{2}{|c|}{No Infeasibility Check}& \multicolumn{2}{|c|}{Infeasibility Check} \\
\cline{4-7}
& & Length &   Nodes &  Time    & Nodes   & Time  \\
\hline \hline
12 & 6-6  &  9.138   &53,460  & 0.16  & 20,229  & 0.07    \\
12 & 4-4-4  &  9.138   & 346,446   & 1.33& 174,293   & 0.62    \\
12 & 3-3-3-3  &  9.138& 1,388,183  & 6.44 & 1,078,853  & 4.90    \\
\hline\hline
16 & 8-8  &  14.040  & 2,137,740    &  4.74 & 705,790      &  1.80  \\
16 & 4-4-4-4  &  14.040  &  251,946,069   &  851.00 &  203,213,736   &  772.04  \\
\hline \hline
18 & 6-6-6  &  17.000 &879,249,695    & 4,713.00 & 508,851,444    & 2,846.52    \\
\hline
\end{tabular}
\end{center}
\caption{The effect of infeasibility checking on ($N,4$)-TopSpin using cost-splitting.} \label{tbl-TopSpinCheck}
\end{table}

Infeasibility checking produces almost no benefit for the 17-Pancake puzzle
with location-based costs because the conditions of
Lemma \ref{LEM.INFEASIBLE} are almost never satisfied.
The experiment discussed at the end of Section \ref{fixreference}
showed that fewer than 2\% of the states
satisfy the conditions of Lemma \ref{LEM.INFEASIBLE}
for the 3-7-7 abstraction, and fewer than 0.2\% of the states
satisfy the conditions of Lemma \ref{LEM.INFEASIBLE}
for the 5-6-6 and 4-4-4-5 abstractions.


Infeasibility checking for the 13-Pancake puzzle
with cost-splitting also produces very little benefit, but
for a different reason.
For example, Table \ref{tbl-13pancak-infeasible} shows the effect
of infeasibility checking on the 13-Pancake puzzle;
the results shown are averages over 1000 start states.
Cost-splitting in this state space produces fractional edge costs
that are multiples of
$\frac{1}{360360}$
(360360 is the Least Common Multiple of
the integers from 1 to 13), and therefore if infeasibility is detected
the amount added is
$\frac{1}{360360}$.
But recall that $\hadd$, with cost-splitting, is defined as the ceiling
of $\sum_{i=1}^k \abcost{i}(\abpr{i})$. The value of $\hadd$ will therefore
be the same, whether $\frac{1}{360360}$ is added or not, unless the sum
of the $\abcost{i}(\abpr{i})$ is exactly an integer.
As Table \ref{tbl-13pancak-infeasible} shows, this does happen but only
rarely.

\begin{table}[htb]
\begin{center}
\begin{tabular}{|l|r|r|r|r|r|r|}\hline
    &             & Average  & \multicolumn{4}{|c|}{$\hadd$ based on costing-splitting}  \\
\cline {4-7}
$N$ & Abs & Solution & \multicolumn{2}{|c|}{No Infeasibility Check}& \multicolumn{2}{|c|}{Infeasibility Check} \\
\cline{4-7}
& & Length &   Nodes &  Time    & Nodes   & Time  \\
\hline \hline
13 & 6-7  &  11.791       & 1,218,903  & 0.3622    &1,218,789&0.4453    \\
\hline
\end{tabular}
\end{center}
\caption{The effect of infeasibility checking on the 13-Pancake puzzle using cost-splitting. } \label{tbl-13pancak-infeasible}
\end{table}



\section{Conclusions}
\label{sec-conclusions}
In this paper we have presented a formal, general definition
of additive abstractions that removes the restrictions
of most previous definitions, thereby enabling additive abstractions to be
defined for any state space. We have proven that heuristics
based on additive abstractions are consistent as well as admissible.
Our definition formalizes
the intuitive idea that abstractions will be additive
provided the cost of each operator is divided among the
abstract spaces, and we have presented two specific, practical methods
for defining abstract costs, cost-splitting and location-based costs.
These methods were applied to three standard state spaces that
did not have additive abstractions according to previous
definitions: TopSpin, Rubik's Cube, and the Pancake puzzle.
Additive abstractions using cost-splitting reduce
search time substantially for (18,4)-TopSpin and additive abstractions
using location-based costs reduce search time for the 17-Pancake
puzzle by three orders of magnitude over the state of the art.
We also report negative results, for example on Rubik's Cube, demonstrating
that additive abstractions are not always superior to the standard,
maximum-based method for combining multiple abstractions.

A distinctive feature of our definition is that each edge in
an abstract space has two costs instead of just one.
This was inspired by previous definitions treating ``distinguished"
moves differently than ``don't care" moves in calculating least-cost
abstract paths.
Formalizing this idea with two costs per edge has enabled us
to develop
a way of testing if the heuristic value returned by
additive abstractions is provably too low (``infeasible").
This test produced no speedup when applied to the Pancake puzzle, but
roughly halved the search time for the 15-puzzle and in most of
our experiments with TopSpin.


\section{Acknowledgments}
This research was supported in part by funding from Canada's Natural
Sciences and Engineering Research Council (NSERC). Sandra Zilles and
Jonathan Schaeffer suggested useful improvements to drafts of this paper.
This research was also supported by the Israel Science Foundation (ISF)
under grant number 728/06 to Ariel Felner.



\vskip 0.2in
\bibliographystyle{theapa}
\bibliography{YFADD}

\begin{thebibliography}{}

\bibitem[\protect\BCAY{Banerji}{Banerji}{1980}]{AItheory}
Banerji, R.~B. \BBOP1980\BBCP.
\newblock {\Bem Artificial Intelligence: A Theoretical Approach}.
\newblock North Holland.

\bibitem[\protect\BCAY{Chen\ \BBA\ Skiena}{Chen\ \BBA\ Skiena}{1996}]{chen96}
Chen, T.\BBACOMMA\  \BBA\ Skiena, S. \BBOP1996\BBCP.
\newblock \BBOQ Sorting with fixed-length reversals\BBCQ\
\newblock {\Bem Discrete Applied Mathematics}, {\Bem 71\/}(1--3), 269--295.

\bibitem[\protect\BCAY{Culberson\ \BBA\ Schaeffer}{Culberson\ \BBA\
  Schaeffer}{1996}]{PDB96}
Culberson, J.~C.\BBACOMMA\  \BBA\ Schaeffer, J. \BBOP1996\BBCP.
\newblock \BBOQ Searching with pattern databases\BBCQ\
\newblock In {\Bem Advances in Artificial Intelligence (Lecture Notes in
  Artificial Intelligence 1081)}, \BPGS\ 402--416. Springer.

\bibitem[\protect\BCAY{Culberson\ \BBA\ Schaeffer}{Culberson\ \BBA\
  Schaeffer}{1994}]{PDB94}
Culberson, J.~C.\BBACOMMA\  \BBA\ Schaeffer, J. \BBOP1994\BBCP.
\newblock \BBOQ Efficiently searching the 15-puzzle\BBCQ\
\newblock \BTR\ TR94-08, Department of Computing Science, University of
  Alberta.

\bibitem[\protect\BCAY{Culberson\ \BBA\ Schaeffer}{Culberson\ \BBA\
  Schaeffer}{1998}]{PDB98}
Culberson, J.~C.\BBACOMMA\  \BBA\ Schaeffer, J. \BBOP1998\BBCP.
\newblock \BBOQ Pattern databases\BBCQ\
\newblock {\Bem Computational Intelligence}, {\Bem 14(3)}, 318--334.

\bibitem[\protect\BCAY{Edelkamp}{Edelkamp}{2001}]{planningPDB}
Edelkamp, S. \BBOP2001\BBCP.
\newblock \BBOQ Planning with pattern databases\BBCQ\
\newblock In {\Bem Proceedings of the 6th European Conference on Planning},
  \BPGS\ 13--24.

\bibitem[\protect\BCAY{Edelkamp}{Edelkamp}{2002}]{EdelkampSymbolic}
Edelkamp, S. \BBOP2002\BBCP.
\newblock \BBOQ Symbolic pattern databases in heuristic search planning\BBCQ\
\newblock In {\Bem Proceedings of Sixth International Conference on AI Planning
  and Scheduling (AIPS-02)}, \BPGS\ 274--283.

\bibitem[\protect\BCAY{Erdem\ \BBA\ Tillier}{Erdem\ \BBA\
  Tillier}{2005}]{genome}
Erdem, E.\BBACOMMA\  \BBA\ Tillier, E. \BBOP2005\BBCP.
\newblock \BBOQ Genome rearrangement and planning\BBCQ\
\newblock In {\Bem Proceedings of the Twentieth National Conference on
  Artificial Intelligence (AAAI-05)}, \BPGS\ 1139--1144.

\bibitem[\protect\BCAY{Felner\ \BBA\ Adler}{Felner\ \BBA\
  Adler}{2005}]{FelnerAdler}
Felner, A.\BBACOMMA\  \BBA\ Adler, A. \BBOP2005\BBCP.
\newblock \BBOQ Solving the 24 puzzle with instance dependent pattern
  databases\BBCQ\
\newblock {\Bem Proc. SARA-2005, Lecture Notes in Artificial Intelligence},
  {\Bem 3607}, 248--260.

\bibitem[\protect\BCAY{Felner, Korf,\ \BBA\ Hanan}{Felner
  et~al.}{2004}]{ADDPDB}
Felner, A., Korf, E., \BBA\ Hanan, S. \BBOP2004\BBCP.
\newblock \BBOQ Additive pattern database heuristics\BBCQ\
\newblock {\Bem Journal of Artificial Intelligence Research}, {\Bem 22},
  279--318.

\bibitem[\protect\BCAY{Felner, Zahavi, Schaeffer,\ \BBA\ Holte}{Felner
  et~al.}{2005}]{duallookups}
Felner, A., Zahavi, U., Schaeffer, J., \BBA\ Holte, R. \BBOP2005\BBCP.
\newblock \BBOQ Dual lookups in pattern databases\BBCQ\
\newblock In {\Bem Proceedings of the Nineteenth International Joint Conference
  on Artificial Intelligence (IJCAI-05)}, \BPGS\ 103--108.

\bibitem[\protect\BCAY{Gaschnig}{Gaschnig}{1979}]{gaschnig79}
Gaschnig, J. \BBOP1979\BBCP.
\newblock \BBOQ A problem similarity approach to devising heuristics: First
  results\BBCQ\
\newblock In {\Bem Proceedings of the Sixth International Joint Conference on
  Artificial Intelligence (IJCAI-79)}, \BPGS\ 301--307.

\bibitem[\protect\BCAY{Guida\ \BBA\ Somalvico}{Guida\ \BBA\
  Somalvico}{1979}]{guida79}
Guida, G.\BBACOMMA\  \BBA\ Somalvico, M. \BBOP1979\BBCP.
\newblock \BBOQ A method for computing heuristics in problem solving\BBCQ\
\newblock {\Bem Information Sciences}, {\Bem 19}, 251--259.

\bibitem[\protect\BCAY{Haslum, Bonet,\ \BBA\ Geffner}{Haslum
  et~al.}{2005}]{AAAI2005Haslum}
Haslum, P., Bonet, B., \BBA\ Geffner, H. \BBOP2005\BBCP.
\newblock \BBOQ New admissible heurisitics for domain-independent
  planning\BBCQ\
\newblock In {\Bem Proceedings of The Twentieth National Conference on
  Artificial Intelligence (AAAI-05)}, \BPGS\ 1163--1168.

\bibitem[\protect\BCAY{Haslum, Botea, Helmert, Bonet,\ \BBA\ Koenig}{Haslum
  et~al.}{2007}]{AAAI2007Haslum}
Haslum, P., Botea, A., Helmert, M., Bonet, B., \BBA\ Koenig, S. \BBOP2007\BBCP.
\newblock \BBOQ Domain-independent construction of pattern database heuristics
  for cost-optimal planning\BBCQ\
\newblock In {\Bem Proceedings of The Twenty-Second National Conference on
  Artificial Intelligence (AAAI-07)}, \BPGS\ 1007--1012.

\bibitem[\protect\BCAY{Hell\ \BBA\ Nesetril}{Hell\ \BBA\
  Nesetril}{2004}]{PavolHell}
Hell, P.\BBACOMMA\  \BBA\ Nesetril, J. \BBOP2004\BBCP.
\newblock \BBOQ Graphs and homomorphisms\BBCQ\
\newblock {\Bem The Oxford Lecture Series in Mathematics and its Applications},
  {\Bem 28}.

\bibitem[\protect\BCAY{Helmert, Haslum,\ \BBA\ Hoffmann}{Helmert
  et~al.}{2007}]{ICAPS2007Helmert}
Helmert, M., Haslum, P., \BBA\ Hoffmann, J. \BBOP2007\BBCP.
\newblock \BBOQ Flexible abstraction heuristics for optimal sequential
  planning\BBCQ\
\newblock In {\Bem Proceedings of the 17th International Conference on
  Automated Planning and Scheduling (ICAPS-07)}, \BPGS\ 176--183.

\bibitem[\protect\BCAY{Hern{\'a}dv{\"o}lgyi\ \BBA\ Holte}{Hern{\'a}dv{\"o}lgyi\
  \BBA\ Holte}{2000}]{SARA2000}
Hern{\'a}dv{\"o}lgyi, I.\BBACOMMA\  \BBA\ Holte, R.~C. \BBOP2000\BBCP.
\newblock \BBOQ Experiments with automatically created memory-based
  heuristics\BBCQ\
\newblock {\Bem Proc. SARA-2000, Lecture Notes in Artificial Intelligence},
  {\Bem 1864}, 281--290.

\bibitem[\protect\BCAY{Hern{\'a}dv{\"o}lgyi}{Hern{\'a}dv{\"o}lgyi}{2003}]{sop}
Hern{\'a}dv{\"o}lgyi, I.~T. \BBOP2003\BBCP.
\newblock \BBOQ Solving the sequential ordering problem with automatically
  generated lower bounds\BBCQ\
\newblock In {\Bem Proceedings of Operations Research 2003 (Heidelberg,
  Germany)}, \BPGS\ 355--362.

\bibitem[\protect\BCAY{Holte, Perez, Zimmer,\ \BBA\ MacDonald}{Holte
  et~al.}{1996}]{HierarchicalA}
Holte, R.~C., Perez, M.~B., Zimmer, R.~M., \BBA\ MacDonald, A.~J.
  \BBOP1996\BBCP.
\newblock \BBOQ Hierarchical {A}*: Searching abstraction hierarchies
  efficiently\BBCQ\
\newblock In {\Bem Proceedings of the Thirteenth National Conference on
  Artificial Intelligence (AAAI-96)}, \BPGS\ 530--535.

\bibitem[\protect\BCAY{Holte, Felner, Newton, Meshulam,\ \BBA\ Furcy}{Holte
  et~al.}{2006}]{maxingAIJ06}
Holte, R.~C., Felner, A., Newton, J., Meshulam, R., \BBA\ Furcy, D.
  \BBOP2006\BBCP.
\newblock \BBOQ Maximizing over multiple pattern databases speeds up heuristic
  search\BBCQ\
\newblock {\Bem Artificial Intelligence}, {\Bem 170}, 1123--1136.

\bibitem[\protect\BCAY{Holte, Grajkowski,\ \BBA\ Tanner}{Holte
  et~al.}{2005}]{HierarchialRevisited}
Holte, R.~C., Grajkowski, J., \BBA\ Tanner, B. \BBOP2005\BBCP.
\newblock \BBOQ Hierarchical heuristic search revisited\BBCQ\
\newblock {\Bem Proc. SARA-2005, Lecture Notes in Artificial Intelligence},
  {\Bem 3607}, 121--133.

\bibitem[\protect\BCAY{Holte\ \BBA\ Hern{\'a}dv{\"o}lgyi}{Holte\ \BBA\
  Hern{\'a}dv{\"o}lgyi}{2004}]{robistvan04}
Holte, R.~C.\BBACOMMA\  \BBA\ Hern{\'a}dv{\"o}lgyi, I.~T. \BBOP2004\BBCP.
\newblock \BBOQ Steps towards the automatic creation of search heuristics\BBCQ\
\newblock \BTR\ TR04-02, Computing Science Department, University of Alberta,
  Edmonton, Canada T6G 2E8.

\bibitem[\protect\BCAY{Holte, Newton, Felner, Meshulam,\ \BBA\ Furcy}{Holte
  et~al.}{2004}]{multiplePDB}
Holte, R.~C., Newton, J., Felner, A., Meshulam, R., \BBA\ Furcy, D.
  \BBOP2004\BBCP.
\newblock \BBOQ Multiple pattern databases\BBCQ\
\newblock In {\Bem Proceedings of the Fourteenth International Conference on
  Automated Planning and Scheduling (ICAPS-04)}, \BPGS\ 122--131.

\bibitem[\protect\BCAY{Katz\ \BBA\ Domshlak}{Katz\ \BBA\
  Domshlak}{2007}]{katz2007}
Katz, M.\BBACOMMA\  \BBA\ Domshlak, C. \BBOP2007\BBCP.
\newblock \BBOQ Structural patterns heuristics: Basic idea and concrete
  instance\BBCQ\
\newblock In {\Bem Proceedings of ICAPS-07 Workshop on Heuristics for
  Domain-independent Planning: Progress, Ideas, Limitations, Challenges}.

\bibitem[\protect\BCAY{Kibler}{Kibler}{1982}]{kibler}
Kibler, D. \BBOP1982\BBCP.
\newblock \BBOQ Natural generation of admissible heuristics\BBCQ\
\newblock \BTR\ TR-188, University of California at Irvine.

\bibitem[\protect\BCAY{Korf}{Korf}{1985}]{IDAstar}
Korf, R.~E. \BBOP1985\BBCP.
\newblock \BBOQ Depth-first iterative-deepening: An optimal admissible tree
  search\BBCQ\
\newblock {\Bem Artificial Intelligence}, {\Bem 27\/}(1), 97--109.

\bibitem[\protect\BCAY{Korf}{Korf}{1997}]{rubikPDB}
Korf, R.~E. \BBOP1997\BBCP.
\newblock \BBOQ Finding optimal solutions to {R}ubik's {C}ube using pattern
  databases\BBCQ\
\newblock In {\Bem Proceedings of the Fourteenth National Conference on
  Artificial Intelligence (AAAI-97)}, \BPGS\ 700--705.

\bibitem[\protect\BCAY{Korf\ \BBA\ Felner}{Korf\ \BBA\
  Felner}{2002}]{disjointPDB}
Korf, R.~E.\BBACOMMA\  \BBA\ Felner, A. \BBOP2002\BBCP.
\newblock \BBOQ Disjoint pattern database heuristics\BBCQ\
\newblock {\Bem Artificial Intelligence}, {\Bem 134}, 9--22.

\bibitem[\protect\BCAY{Korf\ \BBA\ Taylor}{Korf\ \BBA\
  Taylor}{1996}]{KorfTaylor1996}
Korf, R.~E.\BBACOMMA\  \BBA\ Taylor, L.~A. \BBOP1996\BBCP.
\newblock \BBOQ Finding optimal solutions to the twenty-four puzzle\BBCQ\
\newblock In {\Bem Proceedings of the Thirteenth National Conference on
  Artificial Intelligence (AAAI-96)}, \BPGS\ 1202--1207.

\bibitem[\protect\BCAY{Mostow\ \BBA\ Prieditis}{Mostow\ \BBA\
  Prieditis}{1989}]{mostow89}
Mostow, J.\BBACOMMA\  \BBA\ Prieditis, A. \BBOP1989\BBCP.
\newblock \BBOQ Discovering admissible heuristics by abstracting and
  optimizing: A transformational approach\BBCQ\
\newblock In {\Bem Proceedings of the International Joint Conference on
  Artificial Intelligence (IJCAI-89)}, \BPGS\ 701--707.

\bibitem[\protect\BCAY{Pearl}{Pearl}{1984}]{PearlHeuristics}
Pearl, J. \BBOP1984\BBCP.
\newblock {\Bem Heuristics}.
\newblock Addison Wesley, Reading, MA.

\bibitem[\protect\BCAY{Prieditis}{Prieditis}{1993}]{machinediscovery}
Prieditis, A.~E. \BBOP1993\BBCP.
\newblock \BBOQ Machine discovery of effective admissible heuristics\BBCQ\
\newblock {\Bem Machine Learning}, {\Bem 12}, 117--141.

\bibitem[\protect\BCAY{Valtorta}{Valtorta}{1984}]{valtorta1984}
Valtorta, M. \BBOP1984\BBCP.
\newblock \BBOQ A result on the computational complexity of heuristic estimates
  for the {A}* algorithm\BBCQ\
\newblock {\Bem Information Science}, {\Bem 34}, 48--59.

\bibitem[\protect\BCAY{Yang, Culberson,\ \BBA\ Holte}{Yang
  et~al.}{2007}]{additiveTheory07}
Yang, F., Culberson, J., \BBA\ Holte, R. \BBOP2007\BBCP.
\newblock \BBOQ A general additive search abstraction\BBCQ\
\newblock \BTR\ TR07-06, Computing Science Department, University of Alberta,
  Edmonton, Canada T6G 2E8.

\bibitem[\protect\BCAY{Zahavi, Felner, Holte,\ \BBA\ Schaeffer}{Zahavi
  et~al.}{2006}]{dualsearch}
Zahavi, U., Felner, A., Holte, R., \BBA\ Schaeffer, J. \BBOP2006\BBCP.
\newblock \BBOQ Dual search in permutation state spaces\BBCQ\
\newblock In {\Bem Proceedings of the Twenty-First Conference on Artificial
  Intelligence (AAAI-06)}, \BPGS\ 1076--1081.

\end{thebibliography}

\end{document}